\documentclass[journal]{IEEEtran}
\usepackage {threeparttable}
\usepackage{amsthm}
\usepackage{flushend}
\usepackage{multirow}
\usepackage{subfigure}
\usepackage{makecell}
\usepackage{booktabs}
\usepackage{array}
\usepackage{float}
\usepackage{amsmath}
\usepackage{amsfonts}
\usepackage{epsfig}
\usepackage{graphicx}

\usepackage{arydshln}
\usepackage{algpseudocode}
\usepackage{verbatim}
\usepackage{enumerate}
\usepackage{rotating}\usepackage{enumitem}
\usepackage{color}
\usepackage{caption} 
\usepackage{mathrsfs}
\usepackage{amssymb}
\usepackage{mathtools}
\usepackage[linesnumbered,ruled,vlined]{algorithm2e}
\usepackage{cite}
\usepackage{float}
\usepackage{soul}
\usepackage{tikz}
\usepackage{xcolor}
\usepackage{hyperref}
\hypersetup{hidelinks} 
\usepackage{soul}
\usepackage{booktabs}
\usepackage{multirow}
\usepackage{makecell}
\usepackage{array}
\usetikzlibrary{arrows.meta}
\usetikzlibrary{patterns}
\usepackage{tcolorbox}
\usepackage{fancyhdr}
\usepackage{flushend}
\allowdisplaybreaks[4]
\newtheorem{mydef}{\bf Definition}
\newtheorem{mythm}{\bf Theorem}
\newtheorem{myprob}{\bf Problem}

\newtheorem{mypro}{\bf Proposition}

\newtheorem{remark}{Remark}

\usepackage{amsfonts}
\DeclareMathSymbol{\shortminus}{\mathbin}{AMSa}{"39}

\floatname{algorithm}{Procedure}

\title{Signal Temporal Logic Motion Planning\\  via Graphs of Convex Sets}

  \author{Yu Chen, Ancheng Hou, Mingyang Feng, Xiao Yu and Xiang Yin
 \thanks{This work was supported by  the National Natural Science Foundation of China (62061136004, 62173226, 61833012).}
 	\thanks{Yu Chen, Ancheng Hou, Mingyang Feng and Xiang Yin are with School of Automation \& Intelligent Sensing, Shanghai Jiao Tong University, Shanghai 200240, China. Xiao Yu is with Institute of Artificial Intelligence, Xiamen University, Xiamen, China.
  	{\tt\small \{yuchen26, hou.ancheng, Fmy-135214, yinxiang\}@sjtu.edu.cn, xiaoyu@xmu.edu.cn}.}
 }
\begin{document}
	
\maketitle
	\thispagestyle{empty}
	\pagestyle{empty}

\begin{abstract}
This paper investigates continuous-time motion planning under Signal Temporal Logic (STL) specifications.
The goal is to generate smooth robot trajectories that satisfy high-level logical and timing requirements while respecting low-level motion constraints.
To this end, we propose an efficient framework that combines timed-automata reasoning with graphs of convex sets (GCS).
An STL specification is first represented by a timed automaton, which is then coupled with a convex decomposition of the configuration space to form a joint transition system encoding both task progress and region occupancy.
Based on this joint transition system, the STL motion-planning problem is reformulated as a shortest-path problem over a GCS, whose solution induces a smooth B\'ezier-spline trajectory satisfying the STL specification, smoothness requirements, and velocity bounds.
We establish the soundness of the proposed formulation and analyze its computational complexity, showing that, once the timed automaton and convex decomposition are fixed, the convex relaxation scales polynomially with the configuration-space dimension and the B\'ezier degree.
We further develop a compact timed-automaton construction for an expressive STL fragment using dedicated templates and Boolean composition.
Numerical experiments on low-dimensional benchmarks, a $3$-D quadrotor, a $30$-DoF humanoid, and a hardware experiment on a UR-3 robot arm demonstrate that the proposed method efficiently solves complex STL motion-planning problems and produces smooth executable trajectories.
\end{abstract}

\section{Introduction}
\subsection{Motivation}

Task and motion planning is a fundamental problem in robotics, where a robot must generate physically feasible motions while accomplishing high-level tasks.
Unlike classical motion planning, which mainly focuses on geometric or dynamical feasibility, task and motion planning must also jointly reason about  task-level constraints such as logical ordering and  timing requirements. This coupling makes the problem particularly challenging: low-level motion feasibility is typically defined over a continuous configuration space, whereas high-level task specifications are often discrete, logical, and combinatorial. 

In recent years, formal languages have attracted increasing attention as a principled way to describe high-level robotic tasks ~\cite{kressgazit2018synthesis,belta2019formal,yin2024formal}. Among them, Signal Temporal Logic (STL) has emerged as an expressive specification language for temporal behaviors of real-valued physical signals~\cite{maler2004monitoring,fainekos2009motion}.
STL can naturally encode requirements such as ordered visits, deadlines, dwell-time constraints, persistent surveillance, and safety conditions.
This makes it well suited for robotic applications in which task satisfaction depends not only on where the robot moves, but also on when and for how long certain conditions hold.
Recent applications in aerial robotics~\cite{silano2021power,yang2025stlgame}, robotic manipulation~\cite{puranic2021learning,sewlia2022cooperative,yuasa2026neuro}, and legged locomotion~\cite{gu2025robust} further demonstrate the practical relevance of STL-based planning.

Despite its expressive power, STL motion planning remains difficult to scale.
A common approach is to encode STL constraints over time-discretized trajectories and solve the resulting optimization problem~\cite{raman2014mpc,sadraddini2015robust,farahani2015robust,kurtz2022mixed,cardona2025disjunction,kurtz2020trajectory}.
Such methods are attractive because they can directly incorporate system dynamics, input bounds, and logical constraints within a single optimization problem.
However, they usually lead to large mixed-integer programs, whose complexity grows rapidly with the planning horizon, the number of logical constraints, and the dimension of the system.
Moreover, fine time discretization is often needed to reduce inter-sample violations, such as passing through an obstacle between two sampled states, which  further increases the number of decision variables. Consequently, discretization-based STL planning becomes especially challenging for long-horizon tasks and high-dimensional robotic systems.

An alternative is to decouple the full dynamic optimization from the high-level planning problem by generating continuous  trajectories.
For many robotic systems, especially differentially flat systems, smooth reference trajectories can be executed by low-level controllers and therefore provide a practical route to dynamic feasibility~\cite{murray1995differential}.
Graphs of convex sets (GCS) have recently emerged as a powerful framework for this purpose, combining discrete path selection with continuous trajectory optimization over convex regions~\cite{marcucci2024thesis,marcucci2024shortest,marcucci2023motion}.
GCS-based methods can generate smooth continuous-time trajectories and have shown favorable scalability in high-dimensional motion-planning problems.
However, existing GCS formulations  mainly focused on classical objectives such as obstacle avoidance, and reach-avoid navigation, with limited treatment of rich temporal-logic tasks~\cite{kurtz2023temporal}.
How to retain the scalability and smooth-trajectory advantages of GCS while enforcing the logical and timing requirements of STL specifications remains the key challenge addressed in this paper.

\subsection{Our Contributions} 

In this paper, we  address the task-and-motion-planning problem under signal temporal logic specifications.
Our objective is to synthesize a continuous-time smooth trajectory that satisfies a given STL task while respecting geometric, timing, smoothness, and velocity constraints.
To this end, we develop a unified planning framework that combines timed-automata reasoning with graphs of convex sets.
Starting from an STL specification, we first represent the temporal task by a timed automaton, which captures the logical and timing progress required for task satisfaction.
We then combine this timed automaton with a convex decomposition of the configuration space to construct a joint transition system that records both the task progress and the convex region occupied by the trajectory.
Based on this joint transition system, we build a GCS instance and reformulate STL motion planning as a shortest-path problem over convex sets, where each selected path induces a smooth B\'ezier-spline trajectory satisfying the STL specification, and the imposed motion constraints.

In this way, the proposed framework provides a bridge between formal temporal reasoning and continuous trajectory optimization. The timed automaton handles the logical structure and real-time requirements of STL specifications, while the GCS formulation exploits convex optimization to generate smooth trajectories efficiently over continuous configuration spaces. The main contributions of this paper are summarized as follows:
\begin{itemize}[leftmargin=*,topsep=0pt,itemsep=3pt]
    \item We propose a GCS-based formulation for STL motion planning.
    Given an STL specification represented by a timed automaton, the proposed construction converts logical task satisfaction, region occupancy, smoothness constraints, and velocity bounds into a shortest-path problem over convex sets.
    The resulting solution directly yields a continuous-time B\'ezier-spline trajectory.

    \item In addition to the general formulation, we identify an expressive STL fragment that covers many robotic  tasks and provide a compact timed-automaton construction for this fragment.
    The construction is based on constant-size templates for elementary temporal patterns and union/product operations for Boolean composition, which substantially reduces the complexity compared with classical general-purpose timed-automaton constructions.

    \item We establish the soundness of the proposed GCS formulation, showing that every feasible solution of the constructed GCS problem yields a trajectory satisfying the original STL task together with the smoothness and velocity constraints.
    We also analyze the computational complexity and show that, once the timed automaton and convex decomposition are fixed, the convex relaxation scales polynomially with the configuration-space dimension and the B\'ezier degree.

    \item We conduct extensive numerical and hardware experiments to validate the scalability and practicality of the proposed approach.
    The experiments include low-dimensional STL planning benchmarks, a $3$-D quadrotor example, a $30$-DoF humanoid simulation, and a hardware implementation on a UR-3 robot arm.
    The results show that our method can efficiently solve complex STL motion-planning problems and generate smooth executable trajectories for both low-dimensional and high-dimensional robotic systems.
\end{itemize}
\subsection{Related Works}

\textbf{STL Planning via Optimization-Based Approaches:}
A fundamental approach to STL motion planning is to formulate the problem as a constrained optimization problem.
For systems with time-discretizations, both the system dynamics and the satisfaction of STL formulae can be encoded as algebraic constraints, leading to mixed-integer optimization formulations~\cite{raman2014mpc,raman2015reactive,farahani2015robust,sadraddini2015robust,kurtz2022mixed,cardona2023weighted,cardona2025disjunction}.
These methods provide precise optimization-based formulations for the discretized planning problem and can explicitly incorporate dynamics, input constraints, and logical requirements.
However, they typically suffer from rapidly growing computational complexity as the planning horizon, the system dimension, the number of logical constraints, and the discretization resolution increase.
Moreover, fine time discretization is often needed to avoid inter-sample violations, which further increases the size of the resulting mixed-integer programs.

To mitigate the complexity caused by time discretization, continuous-time optimization methods have also been explored. For example, timed-waypoint and piecewise-linear planning methods can handle complex STL specifications without discretizing the entire trajectory into a large number of time samples~\cite{sun2022multi,sato2024optimization}.
However, these methods usually return piecewise-linear trajectories, which require an additional low-level tracking controller for execution and do not directly provide smooth trajectory parameterizations.
In contrast, our approach formulates STL motion planning as a GCS-based shortest-path problem over convex sets. The resulting solution directly yields a smooth B\'ezier-spline trajectory, while the associated convex relaxation scales polynomially with the B\'ezier degree and the configuration-space dimension once the timed automaton and convex decomposition are fixed.

\textbf{STL Planning via Timed Automata:}
Automata-based methods have also been widely studied for motion planning and control under timed temporal logics such as MITL and STL. Foundational results in formal methods show that timed specifications can be translated into timed automata or related automaton models~\cite{alur1994theory,alur1996benefits,brihaye2017mightyl,ferrere2019real}.
The resulting automata encode the temporal progress of the specification and can then be used as the basis for planning  under timed-logic constraints~\cite{fu2015computational,zhou2016timed,nikou2016cooperative,lindemann2020efficient}.
However, most  existing automata-based planning methods  require a finite abstraction of the underlying dynamics. Constructing such abstractions becomes increasingly expensive as the system dimension and resolution grow, which limits their scalability to high-dimensional robotic systems.
In contrast, our approach avoids building a global abstraction and directly combines timed-automaton reasoning with GCS-based continuous trajectory optimization.
Furthermore, although our framework can use timed automata obtained from standard constructions such as~\cite{alur1996benefits}, we also provide a more compact construction for an expressive STL fragment, where  elementary temporal formulae are handled by dedicated TA templates, and compositional constructions are needed only for conjunction and disjunction.
This avoids recursive temporal-operator composition and significantly reduces the automaton size for the considered class of robotic tasks.

\textbf{Sampling-Based STL Planning:}
Sampling-based methods constitute another important class of motion-planning algorithms, with RRT~\cite{lavalle2001randomized} and its asymptotically optimal variant RRT$^\ast$~\cite{karaman2011sampling} being representative examples.
Recent works have extended this paradigm to STL planning by incorporating MILP encodings~\cite{nikou2016cooperative}, STL robustness measures~\cite{vasile2017sampling,ahmad2026rrt}, control barrier functions~\cite{marchesini2026sampling}, or automata-theoretic guidance~\cite{ho2022automaton} into the sampling process.
These methods are attractive because they are broadly applicable and can provide probabilistic completeness or asymptotic optimality guarantees under suitable assumptions.
However, it is well known that sampling-based motion planners often suffer from poor practical scalability in high-dimensional configuration spaces, where the number of samples required to find a feasible or high-quality solution can grow rapidly. This difficulty becomes even more pronounced for STL planning, since the planner must satisfy not only geometric feasibility but also logical ordering and timing constraints.
Moreover, sampling-based planners typically return discrete waypoints or piecewise-linear paths, and additional smoothing, post-processing, or tracking control is often needed to obtain executable trajectories.
In contrast, our GCS-based approach constructs a convex decomposition of the configuration space and then solves an optimization problem over the resulting graph of convex sets.
This allows us to directly generate continuous-time B\'ezier trajectories and explicitly impose smoothness, velocity, and timing constraints within the planning formulation.

\textbf{Learning-Based STL Planning:}
Learning-based approaches have also been explored for STL tasks.
Reinforcement learning methods typically incorporate STL robustness or progress measures into rewards or value functions and then learn policies from interaction or data~\cite{aksaray2016q,balakrishnan2019structured,venkataraman2020tractable,kalagarla2021model,ikemoto2022deep,wang2024synthesis}.
More recently, diffusion-based methods have been used to generate trajectories by sampling from learned models guided by STL-related criteria~\cite{zhong2023guided,meng2024diverse,liu2025zeroshot,liu2026dagstl}.
These approaches are attractive when accurate system models are unavailable or when data-driven generalization is desired.
However, they generally do not provide the same formal task-completion guarantees as synthesis-based methods, since STL satisfaction is usually encouraged through learned rewards, robustness surrogates, or sampling guidance rather than enforced by an explicit formal construction.
By contrast, our method uses timed automata and GCS constraints to enforce STL satisfaction in a sound optimization-based framework.

\textbf{Temporal Logic  Planning via GCS:}
Graphs of convex sets have recently emerged as a powerful framework for continuous motion planning~\cite{marcucci2024thesis,marcucci2024shortest,marcucci2023motion}.
By associating graph vertices with convex sets and edges with convex constraints, GCS formulations combine discrete path selection with continuous trajectory optimization.
Recent works have begun to extend GCS-based planning toward temporal-logic specifications.
The work~\cite{kurtz2023temporal} is particularly close in spirit to ours.
It studies motion planning under LTL specifications by formulating the problem as a shortest-path problem over a GCS and exploiting the strong convex relaxation.
Compared with~\cite{kurtz2023temporal}, our focus is on STL specifications with explicit real-time intervals and continuous-time semantics.
This requires a dedicated treatment of clock variables, timed progress, and STL satisfaction through timed automata.
Another closely related work is~\cite{lin2025towards}, which considers STL planning for discrete-time piecewise-affine systems and, inspired by GCS, proposes a logic-network-flow formulation to obtain tighter convex relaxations than logic-tree-based mixed-integer encodings.
In contrast, our framework directly constructs a GCS formulation for continuous-time STL motion planning.
The selected GCS path induces a smooth B\'ezier-spline trajectory, and the timing constraints are enforced through the coupled timed-automaton and GCS construction.
Thus, our approach combines the scalability of GCS-based continuous trajectory optimization with the formal expressiveness of STL specifications.

\subsection{Organization}

The remainder of this paper is organized as follows. After introducing the necessary background in Section~\ref{sec:prelinimary}, Section~\ref{sec:problemformulation} formulates the considered STL motion-planning problem. 
Section~\ref{sec:GCSforSTL} then shows how to convert STL planning into a GCS-based optimization problem, while Section~\ref{sec:timeautomaton} develops an efficient timed-automaton construction for an expressive STL fragment. 
Experimental results are reported in Section~\ref{sec:simulation}, and concluding remarks are given in Section~\ref{sec:con}.

\section{Preliminaries}\label{sec:prelinimary}

\textbf{Notation:}
We use $\mathbb{R}$, $\mathbb{R}_{\geq 0}$, and $\mathbb{R}^n$ to denote the sets of real numbers, nonnegative real numbers, and $n$-dimensional real vectors, respectively.
The set of nonnegative integers is denoted by $\mathbb{Z}_{\geq 0}$.
For a finite set $A$, its power set is denoted by $2^A$.
Let $T \in \mathbb{R}_{\geq 0}$ be the time horizon.
For $d,n \in \mathbb{Z}_{\geq 0}$ with $n>0$, we define
\[
\mathcal{C}^d_n(T)
=
\{ \xi:[0,T]\to \mathbb{R}^n \mid \xi \text{ is $d$ times conti. differentiable}\}.
\]
When $d=0$, the function $\xi$ is also required to be continuous, and we write $\mathcal{C}_n(T)$ instead of $\mathcal{C}^0_n(T)$.

\subsection{Signal Temporal Logic} 

We use Signal Temporal Logic (STL) formulae~\cite{maler2004monitoring} to specify high-level temporal requirements for robot trajectories.
The syntax of STL formulae is given by
\begin{equation}\label{eq:generalSTL}
    \Phi ::=   \top\mid \pi^\mu \mid \neg \Phi \mid \Phi_1 \wedge \Phi_2 \mid \Phi_1 \mathbf{U}_{[a,b]} \Phi_2,
\end{equation}
where $\top$ denotes the \textsf{true} predicate, and $\pi^\mu$ is an atomic predicate associated with a predicate function $\mu:\mathbb{R}^n \to \mathbb{R}$.
The symbols $\neg$ and $\wedge$ denote negation and conjunction, respectively.
The operator $\mathbf{U}_{[a,b]}$ is the temporal ``\emph{until}'' operator, where $a,b\in \mathbb{R}_{\geq 0}$ and $a \leq b$.
The \emph{horizon} of an STL formula $\Phi$, denoted by $\mathcal{H}(\Phi)$, is defined recursively by
(i) $\mathcal{H}(\pi^\mu)=0$, 
(ii) $\mathcal{H}(\neg \Phi)=\mathcal{H}(\Phi)$, 
(iii) $\mathcal{H}(\Phi_1\wedge \Phi_2)
=
\max\{ \mathcal{H}(\Phi_1),\mathcal{H}(\Phi_2) \}$, and 
(iv)
$\mathcal{H}(\Phi_1 \mathbf{U}_{[a,b]} \Phi_2)
=
b+\max\{ \mathcal{H}(\Phi_1),\mathcal{H}(\Phi_2) \}$.
In this work, we consider STL formulae $\Phi$ satisfying $\mathcal{H}(\Phi)\leq T$.

Throughout the paper, the robot configuration space is $\mathbb{R}^n$.
Given a trajectory $\xi \in \mathcal{C}_n(T)$ and an STL formula $\Phi$, we write $(\xi,t) \models \Phi$ if $\xi$ satisfies $\Phi$ at time $t$.
The semantics of STL formulae are defined recursively as follows:
\begin{align*}
& \left(\xi, t\right) \vDash \pi^{\mu} 
&& \!\!\!\!\!\!\Leftrightarrow  
\mu\left(\xi(t)\right) \geq 0,\\
&\left(\xi, t\right) \vDash \neg \Phi
&&\!\!\!\!\!\!\Leftrightarrow  
 \left(\xi, t\right) \nvDash \Phi,
\\
&\left(\xi, t\right) \vDash \Phi_1 \wedge \Phi_2 
&&\!\!\!\!\!\!\Leftrightarrow  
\left(\xi, t\right) \vDash \Phi_1
\text{ and }
\left(\xi, t\right) \vDash \Phi_2,
\\
&\left(\xi, t\right) \vDash \Phi_1 \mathbf{U}_{[a, b]} \Phi_2
&&\!\!\!\!\!\!\Leftrightarrow  \!\!\!\!\!\!\!\!
\begin{array}{l l}
     &  \exists t^{\prime} \in\left[t+a, t+b\right]:\left(\xi, t^{\prime}\right) \vDash \Phi_2 \\
& 
\text{and }
\forall t^{\prime \prime} \in\left[t, t^{\prime}\right]:
\left(\xi, t^{\prime \prime}\right) \vDash \Phi_1 . 
\end{array}
\end{align*}
The reader is referred to~\cite{maler2004monitoring} for further details on STL semantics.
We also use the following derived temporal operators:
\begin{itemize}
	\item 
	``\emph{eventually}''
	$\mathbf{F}_{[a,b]} \Phi:= \top \mathbf{U}_{[a,b]} \Phi$.
	Thus, $(\xi,t) \models \mathbf{F}_{[a,b]} \Phi$ holds if   $(\xi,t') \models \Phi$ for some $t'\!\in\! [t+a,t+b]$;
	
	\item 
	``\emph{always}''
	$\mathbf{G}_{[a,b]} \Phi:=\neg \mathbf{F}_{[a,b]} \neg \Phi$.
	Thus, $(\xi,t) \models \mathbf{G}_{[a,b]} \Phi$ holds if $(\xi,t') \models \Phi$ for all $t'\!\in\! [t+a,t+b]$.
\end{itemize}

For simplicity, we write $\xi \models \Phi$ when $(\xi,0) \models \Phi$.
For an atomic predicate $\pi^\mu$, its satisfaction region is defined as
$[\pi^\mu]=\{ x \in \mathbb{R}^n \mid \mu(x)\geq 0 \}$. 
In particular, $[\neg \pi^\mu] = \mathbb{R}^n\setminus [\pi^\mu]$.
Moreover, for atomic predicates $\varphi_1$ and $\varphi_2$, we have $[\varphi_1\wedge\varphi_2]=[\varphi_1]\cap [\varphi_2]$.

\subsection{B\'ezier Curves}

In this work, we use B\'ezier curves to parameterize robot trajectories.
Given $K \in \mathbb{Z}_{\geq 0}$, the Bernstein polynomials of degree $K$ are defined by
\begin{align}
\label{eq:berst}
\beta_k^{K} (s) = \binom{K}{k}
s^k
(1 - s)^{K-k},\quad s \in [0,1],
\end{align}
for each $k \in \{0,\dots,K\}$.
By the binomial theorem, $\sum_{k=0}^K\beta_k^K(s)=1$.
Therefore, for any $s \in [0,1]$, the scalars $\beta_k^K(s)$ form the coefficients of a convex combination.

Given control points $\gamma_0, \ldots, \gamma_K \in \mathbb{R}^n$, the corresponding B\'ezier curve $\Gamma: [0,1] \to \mathbb{R}^n$ is the vector-valued polynomial
\begin{align}
\label{eq:bez}
\Gamma(s) = \sum_{k=0}^K \beta_k^K (s) \gamma_k.
\end{align}
We recall several standard properties of B\'ezier curves that will be used later.
The reader is referred to~\cite{farouki1988algorithms} for further details.

\begin{enumerate}
   \item \textbf{Derivative:}
   The derivative $\dot \Gamma$ of a B\'ezier curve $\Gamma$ is a B\'ezier curve of degree $K-1$ whose control points satisfy
   \begin{equation}\label{eq:bezierderi}
       \dot \gamma_k = K (\gamma_{k+1} - \gamma_k),
\quad k \in \{ 0,\dots, K-1\}.
   \end{equation}
We denote by $\Gamma^{(i)}$ and $\gamma^{(i)}$   the $i$-th derivative of the B\'ezier curve and its  control points, respectively.

\item \textbf{Endpoint:}
The B\'ezier curve starts at its first control point and ends at its last control point, i.e., 
\begin{equation}\label{eq:beziercontrol}
    \Gamma(0) = \gamma_0\quad  \text{ and }\quad 
\Gamma(1) = \gamma_K.
\end{equation}

\item \textbf{Convex hull:}
The B\'ezier curve remains in the convex hull of its control points, i.e.,
\begin{equation}\label{eq:bezierconvex}
    \Gamma(s) \in \operatorname{conv} (\{\gamma_0, \ldots, \gamma_K\}),
\quad \forall s \in [0,1].
\end{equation}
\end{enumerate}

\subsection{Graphs of Convex Sets}

A graph of convex sets (GCS) is described by the tuple
\[
\mathcal{G} = (\mathcal{V}, \mathcal{E}, \mathbb{V}=\{\mathcal{X}_v\}_{v\in \mathcal{V}}, \mathbb{E}=\{\mathcal{X}_e\}_{e \in \mathcal{E}},l_e),
\]
where $(\mathcal{V}, \mathcal{E})$ is a directed graph with vertex set $\mathcal{V}$ and edge set $\mathcal{E} \subset \mathcal{V} \times \mathcal{V}$.
Each vertex $v\in\mathcal{V}$ is associated with a convex set $\mathcal{X}_v$, and each edge $e=(u,w)\in\mathcal{E}$ is associated with a convex set $\mathcal{X}_e \subseteq \mathcal{X}_u \times \mathcal{X}_w$. 
The function $l_e:\mathcal{X}_e\to \mathbb{R}_{\geq 0}$ is a convex nonnegative length function associated with edge $e$.

A path $\mathbf{p}$ is a sequence of distinct vertices in $\mathcal{V}$.
We denote by $\mathcal{E}_{\mathbf{p}}$ the set of edges traversed by $\mathbf{p}$.
Given a GCS, the associated optimization problem is to find a shortest path from a source vertex $s \in \mathcal{V}$ to a target vertex $t \in \mathcal{V}$.
This problem can be written as
\begin{subequations}\label{eq:gcs}
\begin{align}
    \min ~& \sum_{e=(u,v)\in\mathcal{E}_\mathbf{p}} l_e(x_u, x_v) \label{eq:cost} \\
    \mathrm{s.t.}~& \mathbf{p} \in \mathcal{P}_{s,t}, \\
                  & x_v \in \mathcal{X}_v, \quad \forall v \in \mathbf{p}, \\
                  & (x_u, x_v) \in \mathcal{X}_e, \quad \forall e = (u,v) \in \mathcal{E}_\mathbf{p}.
\end{align}
\end{subequations}
Here, $\mathcal{P}_{s,t}$ denotes the set of paths from $s$ to $t$, and $x_v \in \mathcal{X}_v$ is the continuous variable associated with vertex $v$.

Problem~\eqref{eq:gcs} can be encoded exactly as a mixed-integer convex program (MICP)~\cite{marcucci2024shortest}.
Although solving the resulting MICP is NP-hard in general, it often admits a tight convex relaxation.
In many instances, a globally optimal solution to~\eqref{eq:gcs} can be obtained by solving the convex relaxation followed by a low-cost rounding step on the integer variables~\cite{marcucci2023motion}.
The reader is referred to~\cite{marcucci2023motion,marcucci2024shortest} for more details on GCS.
In this work, we use the GCS framework to formulate the STL motion-planning problem in a computationally efficient manner.

\section{Problem Formulation} \label{sec:problemformulation}

We consider the STL motion-planning problem for a robot operating under temporal task requirements and motion constraints.
The objective is to compute a trajectory that satisfies a given STL specification while respecting a prescribed velocity bound.
Although we do not explicitly impose the full system dynamics, we require the trajectory to satisfy a smoothness condition, which supports dynamically feasible execution for differentially flat systems~\cite{murray1995differential}.
The problem is formally stated as follows.

\begin{myprob}\label{prob:main}
Given an initial state $x_0 \in \mathbb{R}^n$, a time horizon $T\in \mathbb{R}_{\geq 0}$, an STL task $\Phi$ in \eqref{eq:generalSTL} satisfying $\mathcal{H}(\Phi)\leq T$, a convex velocity bound $\textsf{vel} \subseteq \mathbb{R}^n$,   a smoothness order $d \in \mathbb{Z}_{\geq 0}$,
and   a convex cost functional $J$, 
find a trajectory $\xi:[0,T]\to \mathbb{R}^n$ that solves the following optimization problem:
\begin{subequations}\label{eq:main_problem}
    \begin{align}
        \min_{\xi} ~& J(\xi) \\
        \mathrm{s.t.} ~& \xi \vDash \Phi, \label{eq:STLconsinopt}\\
                     & \xi \in \mathcal{C}^d_n(T),  \label{eq:smoothconstraint} \\
                     & \xi(0) = x_0, \label{eq:initialcons} \\
                     & \dot{\xi}(t) \in \textsf{vel}, \quad \forall t \in [0,T].\label{eq:velocityconstraint}
    \end{align}
\end{subequations} 
\end{myprob}

\section{STL Motion Planning via GCS}\label{sec:GCSforSTL}
This section presents the proposed procedure for converting the STL motion-planning problem in Problem~\ref{prob:main} into a shortest-path optimization problem over a graph of convex sets.
The overall approach consists of the following steps:
\begin{itemize} 
    \item First, we represent the STL specification by a timed automaton (TA), whose accepting runs provide a timed symbolic description of task satisfaction.
    
    \item Second, we combine the TA with a convex decomposition of the configuration space to construct a joint transition system (JTS), which simultaneously records task progress and the convex region occupied by the trajectory.
    
    \item Third, based on the JTS, we construct an instance of GCS in which vertices encode B\'ezier-curve segments, timing variables, and clock values, while edges encode continuity, smoothness, velocity, and timing constraints.
    
    \item Finally, we solve the resulting GCS shortest-path problem, either exactly as a mixed-integer convex program or through its convex relaxation followed by rounding, and reconstruct a continuous-time B\'ezier-spline trajectory from the solution.
\end{itemize}
 We also establish soundness of the formulation and analyze its computational complexity and completeness-related limitations. 
Hereafter, Section~\ref{subsection:defofTA} introduces timed automata and joint transition systems.
Section~\ref{subsection:construct GCS} constructs the corresponding GCS.
Section~\ref{subsection:GCStheoreticalresult} provides the theoretical analysis.

\subsection{Timed Automata and Joint Transition System}\label{subsection:defofTA}

We first introduce timed automata and joint transition systems, which will be used to encode the temporal progress of an STL task and its coupling with the geometric decomposition of the configuration space.

We begin with the notion of a time constraint.
Let $c$ be a clock variable.
A time constraint on $c$, denoted by $\theta(c) \subseteq [0,T]$, is a closed interval of the form $a \leq c \leq b$.
The set of all time constraints on $c$ is denoted by $\Theta(c)$.
Given a set of clock variables $C=\{c_1,\dots,c_n\}$, we define
\[
\Theta(C)=\Theta(c_1)\times\cdots\times\Theta(c_n).
\]
That is, each $\theta\in \Theta(C)$ assigns a time constraint to every clock variable in $C$.
We say that a clock variable has \emph{no constraint} if $\theta(c)=[0,T]$.

\begin{mydef}[Timed Automaton]
A timed automaton (TA) is defined as the $7$-tuple
\begin{equation}\label{eq:TAdef}
    A=(S,S_0,R, C,\delta,M,S_F),
\end{equation}
where 
 $S$ is the set of states;
  $S_0\subseteq S$ is the set of initial states;
 $R=\{ r_1,\dots,r_k \}$ is a set of regions, where $r_i \subseteq \mathbb{R}^n$ for each $i=1,2,\dots,k$;
  $C$ is the set of clock variables;
  $\delta: S\times S \to \Theta(C)\times2^C$ is a partial transition function defined on all valid transitions of the TA.
    For $\delta(s,s')=(\theta,\gamma)$, $\theta$ is the time constraint associated with the transition, and $\gamma$ is the set of clock variables reset by the transition.
    We also write this transition as $s\xrightarrow{\theta,\gamma}s'$;
  $M:S\to R$ maps each TA state to a region such that, whenever the TA is in state $s\in S$, the trajectory must remain in region $M(s)$;
 $S_F \subseteq S$ is the set of accepting states. 
\end{mydef}

Before defining a run of a TA, we introduce several clock-related operations.
Let $\sigma: C \to [0,T]$ be a clock valuation, and let $\theta \in \Theta(C)$ be a time constraint.
We say that $\sigma$ satisfies $\theta$, denoted by $\sigma \models \theta$, if $\sigma(c)\in\theta(c)$ for every $c \in C$.
Given a clock valuation $\sigma$ and a time duration $t \in \mathbb{R}_{\geq 0}$, let $\sigma+t$ denote the clock valuation defined by
\[
(\sigma+t)(c)=\sigma(c)+t,\quad \forall c\in C.
\]
For a set of clock variables $\gamma \subseteq C$, we denote by $\sigma[\gamma]$ the clock valuation obtained by resetting the variables in $\gamma$, i.e.,
\[
\sigma[\gamma](c)=
\begin{cases}
0, & c\in \gamma,\\
\sigma(c), & c\notin \gamma.
\end{cases}
\]

\begin{mydef}[Run of a Timed Automaton]\label{def:runofTA}
Given a timed automaton $A=(S,S_0,R, C,\delta,M,S_F)$, a run $\rho$ of $A$ is a finite sequence
\begin{equation}\label{eq:timeautorun}
        \xrightarrow[\sigma_{0}]{} (s_0,I_0)
        \xrightarrow[\sigma_1]{\theta_1,\gamma_1} (s_1,I_1)
        \xrightarrow[\sigma_2]{\theta_2,\gamma_2}
        \cdots
        \xrightarrow[\sigma_{n}]{\theta_n,\gamma_n} (s_n,I_n)
\end{equation}
such that $s_0 \in S_0$, $I_0=[t_0=0,t_1]$ with $t_0\leq t_1$, $\sigma_0(c)=0$ for all $c \in C$, and for each $i \in \{1,2,\dots,n\}$, the following conditions hold:
\begin{enumerate}
    \item $\delta(s_{i-1},s_i)=(\theta_i,\gamma_i)$;
    \item $I_i=[t_i,t_{i+1}]$ with $t_i \leq t_{i+1}$ and $t_{n+1}=T$;
    \item $(\sigma_{i-1}+t_i-t_{i-1}) \models \theta_i$;
    \item $\sigma_i=(\sigma_{i-1}+t_i-t_{i-1})[\gamma_i]$.
\end{enumerate}
\end{mydef}

The definitions of timed automata and their runs are similar to those in~\cite{alur1996benefits}.
We next provide an intuitive interpretation of a TA run.
Suppose that the run stays in state $s$ over the time interval $I=[t,t']$, and that the clock valuation at time $t$ is $\sigma$.
Then each clock variable evolves continuously over $I$; specifically, its value at time $\tau \in I$ is $\sigma(c)+\tau-t$ for every $c\in C$.
When the transition $s\xrightarrow{\theta,\gamma}s'$ occurs at time $t'$, the updated clock valuation before reset must satisfy the transition constraint, i.e., $(\sigma+t'-t)(c)\in \theta(c),\forall c\in C$.
After the transition, the run enters state $s'$, and the new clock valuation becomes $\sigma'=(\sigma+t'-t)[\gamma]$. 
This procedure is then repeated along the run.
We say that a run $\rho$ of the timed automaton $A$ in \eqref{eq:timeautorun} is \emph{accepting}, denoted by $\rho \models A$, if $s_n \in S_F$, i.e., if the run ends in an accepting state.

We now define the trajectories induced by TA runs.

\begin{mydef}[Trajectory Induced by a Run]\label{def:runinducetra}
Given a run $\rho$ of a timed automaton $A$ in \eqref{eq:timeautorun}, a trajectory $\xi\in \mathcal{C}_n(T)$ is said to be induced by $\rho$ if
\begin{equation}\label{eq:inducedtrajectory}
    \xi(\tau) \in M(s_i), \quad \forall \tau \in I_i,\ i \in \{0,1,\dots,n \}.
\end{equation}
A trajectory $\xi$ is said to be accepting with respect to $A$, denoted by $\xi \models A$, if there exists a run $\rho$ in \eqref{eq:timeautorun} such that $\rho \models A$ and $\rho$ induces $\xi$.
\end{mydef}

We say that a TA $A$ in \eqref{eq:TAdef} satisfies an STL formula $\Phi$ in \eqref{eq:generalSTL}, denoted by $A \models \Phi$, if, for every trajectory $\xi\in \mathcal{C}_n(T)$, it holds that
\begin{equation}\label{eq:desired}
    \xi \models A \implies \xi \models \Phi.
\end{equation}
That is, every trajectory accepted by $A$ is guaranteed to satisfy the STL formula $\Phi$.

We now define the \emph{joint transition system} used to combine the TA with the geometry of the configuration space.
Suppose that we are given a convex decomposition
\[
\mathbb{D}=\{D_1,D_2,\dots,D_k\},
\]
where $D_i \subseteq \mathbb{R}^n$ for each $i=1,2,\dots,k$.
Such a decomposition can be constructed manually for low-dimensional configuration spaces.
For high-dimensional robotic systems, one may instead use algorithms such as IRIS~\cite{deits2015computing,werner2024faster} and C-IRIS~\cite{dai2024certified} to compute convex regions efficiently.
The joint transition system combines the symbolic progress of the TA with the geometric adjacency information of the convex decomposition.

\begin{mydef}[Joint Transition System]
Given a convex decomposition $\mathbb{D}$ and a TA $A=(S,S_0,R,C,\delta,M,S_F)$, the joint transition system (JTS) $\mathbb{T}$ is defined as the $6$-tuple
\begin{equation}\label{eq:transitionsystemdef}
     \mathbb{T}=(Q,Q_0,Q_F, C, \delta^i,\delta^o),
\end{equation}
where
\begin{itemize}
     \item $Q\subseteq S\times \mathbb{D}$ is the set of states such that $(s,D)\in Q$ if $D \subseteq M(s)$;
     \item $Q_0 \subseteq Q$ is the set of initial states such that $(s,D) \in Q_0$ if $s \in S_0$;
     \item $Q_F\subseteq Q$ is the set of accepting states such that $(s,D) \in Q_F$ if $s \in S_F$;
     \item $C$ is the set of clock variables;
     \item $\delta^i \subseteq Q \times Q$ is the set of inner transitions.
     Specifically, for $q=(s,D)$ and $q'=(s',D')$, we have $(q,q')\in \delta^i$ if $s=s'$, $D\neq D'$, and $D\cap D'\neq \emptyset$;
     \item $\delta^o : Q \times Q \to \Theta(C) \times 2^C$ is a partial function for outer transitions.
     Specifically, for $q=(s,D)$ and $q'=(s',D')$, we have $\delta^o(q,q')=(\theta,\gamma)$ if $\delta(s,s')=(\theta,\gamma)$ and $D\cap D'\neq \emptyset$.
\end{itemize}
\end{mydef}

For each JTS state $q=(s,D)\in Q$, the component $s$ records the task progress in the TA, while the component $D$ records the convex region occupied by the trajectory.
In particular, the task progress must start from an initial TA state $s_0\in S_0$.
An inner transition keeps the TA state unchanged and allows the trajectory to move between adjacent convex regions.
By contrast, an outer transition advances the TA state from $s$ to $s'$ while also moving the trajectory between intersecting convex regions.
Finally, once the trajectory reaches a JTS state $q=(s,D)\in Q_F$, the corresponding TA state satisfies $s\in S_F$, and hence the STL task is certified by the accepting condition of the TA.

\subsection{Construction of GCS}\label{subsection:construct GCS}

Given a joint transition system
$\mathbb{T}=(Q,Q_0,Q_F,C,\delta^i,\delta^o)$ in \eqref{eq:transitionsystemdef},
an initial state $x_0\in\mathbb{R}^n$,
a final time $T\in\mathbb{R}_{\geq 0}$,
a convex velocity bound $\textsf{vel}\subseteq\mathbb{R}^n$,
a smoothness order $d\in\mathbb{Z}_{\geq 0}$,
and a B\'ezier degree $K\in\mathbb{Z}_{\geq 0}$ with $K\geq d$,
we construct a graph of convex sets (GCS)
\begin{equation}\label{eq:GCSfromTS}
    \mathcal{G}
    =
    (\mathcal{V}, \mathcal{E},
    \mathbb{V}=\{\mathcal{X}_v\}_{v\in \mathcal{V}},
    \mathbb{E}=\{\mathcal{X}_e\}_{e\in \mathcal{E}},
    l_e),
\end{equation}
whose components are defined as follows.

\textbf{(1) Vertex set:}
The vertex set is defined as
\[
\mathcal{V}=Q\cup\{s,t\},
\]
where $s$ and $t$ are virtual source and target vertices, respectively.
Each vertex $q\in Q$ corresponds to a state of the joint transition system and therefore records both the progress of the timed automaton and the convex region occupied by the trajectory.
The virtual source and target vertices are introduced to unify the treatment of multiple initial and accepting JTS states.

\textbf{(2) Edge set:}
The edge set is defined as
\[
\mathcal{E}
=
\mathcal{E}^i
\cup
\mathcal{E}^o
\cup
\mathcal{E}^s
\cup
\mathcal{E}^t,
\]
where $\mathcal{E}^i=\delta^i$, 
$\mathcal{E}^o=\delta^o$, 
and
$\mathcal{E}^s=\{s\}\times Q_0, 
\mathcal{E}^t=Q_F\times\{t\}$.
Here, $\mathcal{E}^i$ and $\mathcal{E}^o$ correspond to the inner and outer transitions of the JTS, respectively.
An inner edge allows the trajectory to move between adjacent convex regions while keeping the same TA state.
An outer edge advances the TA state and enforces the corresponding clock constraints.
The edge sets $\mathcal{E}^s$ and $\mathcal{E}^t$ connect the virtual source and target vertices to the initial and accepting JTS states.

\textbf{(3) Associated convex sets for vertices:}
For each JTS state $q=(s,D)\in Q$, the associated convex set satisfies
\begin{equation}\label{eq:associatedconvexset}
    \mathcal{X}_q
    \subseteq
    D^{K+1}\times [0,T]^{K+1}\times[0,T]^{|C|}.
\end{equation}
A point $x_q\in\mathcal{X}_q$ takes the form
\begin{equation}\label{eq:continuous pointinassociatedconvexset}
    x_q
    =
    (\nu_0,\nu_1,\dots,\nu_K,
    t_0,t_1,\dots,t_K,
    c_1,c_2,\dots,c_{|C|}),
\end{equation}
where
\begin{itemize} 
    \item $\nu_0,\nu_1,\dots,\nu_K$ are the control points of a B\'ezier curve $\Gamma_q^r$ that describes the geometric path segment;
    \item $t_0,t_1,\dots,t_K$ are the control points of a scalar B\'ezier curve $\Gamma_q^h$ that describes the time parameterization of this segment;
    \item $c_1,c_2,\dots,c_{|C|}$ record the clock values when the trajectory starts visiting the JTS state $q$.
\end{itemize}
Given $x_q\in\mathcal{X}_q$, the trajectory segment associated with $q$ is reconstructed as $\xi_q=\Gamma_q^r\circ(\Gamma_q^h)^{-1}$,
where $\xi_q:[t_0,t_K]\to D$.
To ensure that $\Gamma_q^h$ is invertible, we impose
\begin{equation}\label{eq:invertiblecons}
   t_{k+1}-t_k>0,
   \quad k=0,1,\dots,K-1.
\end{equation}
In numerical implementation, this strict inequality can be replaced by
$t_{k+1}-t_k\geq \varepsilon$ for a small $\varepsilon>0$.

To enforce the velocity bound in \eqref{eq:velocityconstraint}, we further impose
\begin{equation}\label{eq:velocityconsforgcs}
    \frac{\nu_{k+1}-\nu_k}{t_{k+1}-t_k}
    \in \textsf{vel},
    \quad k=0,1,\dots,K-1.
\end{equation}
Since $\textsf{vel}$ is convex and the time increments are positive, these constraints define a convex feasible set.
Therefore, $\mathcal{X}_q$ is characterized by
\eqref{eq:associatedconvexset}, \eqref{eq:invertiblecons}, and \eqref{eq:velocityconsforgcs}.

\textbf{(4) Associated convex constraint sets for edges:}
For each edge $e=(q,q')\in\mathcal{E}^i\cup\mathcal{E}^o$, where $q=(s,D)$ and $q'=(s',D')$, we define an associated convex constraint set
\[
\mathcal{X}_e\subseteq \mathcal{X}_q\times\mathcal{X}_{q'}.
\]
Let $x_q
=
(\nu_0,\nu_1,\dots,\nu_K,
t_0,t_1,\dots,t_K,
c_1,c_2,\dots,c_{|C|})$
and $x_{q'}
=
(\nu'_0,\nu'_1,\dots,\nu'_K,
t'_0,t'_1,\dots,t'_K,
c'_1,c'_2,\dots,c'_{|C|})$. 
The set $\mathcal{X}_e$ imposes two classes of constraints: (i) smoothness constraints for concatenating adjacent B\'ezier segments and 
(ii) timing constraints for enforcing the clock evolution of the TA.
\begin{itemize}
    \item 
    First, to ensure that the concatenated trajectory is $d$-times continuously differentiable, we impose
\begin{equation}\label{eq:differentiablecons}
    \nu_{K-m}^{(m)} = \nu_{0}^{\prime(m)},  
    t_{K-m}^{(m)} = t_{0}^{\prime(m)},
      m=0,1,\dots,d,
\end{equation}
where $\nu^{(m)}$ and $t^{(m)}$ denote the control points of the $m$-th derivatives of the corresponding B\'ezier curves.
When $m=0$, these constraints enforce continuity of the geometric path and the time parameterization.
When $m\geq 1$, they enforce matching derivatives up to order $d$.
    \item 
Second, we impose clock-evolution constraints.
The form of these constraints depends on whether $e$ is an inner edge or an outer edge.
Let the clock variables in $C$ be indexed as
$C=\{\chi_1,\dots,\chi_{|C|}\}$, and let $c_i$ and $c_i'$ denote the values of clock $\chi_i$ at the beginning of vertices $q$ and $q'$, respectively.
If $(q,q')\in\mathcal{E}^i$, then the TA state does not change.
Therefore, all clocks increase by the duration of the current segment:
\begin{equation}\label{eq:timecongcsinner}
    c_i+t_K-t_0=c'_i,
    \quad i=1,2,\dots,|C|.
\end{equation}
These are affine equality constraints.
If $(q,q')\in\mathcal{E}^o$, then the edge corresponds to an outer transition of the TA.
Let $\delta^o(q,q')=(\theta,\gamma)$, 
where $\theta$ is the time constraint and $\gamma\subseteq C$ is the set of clocks reset by this transition.
For each clock, we impose
\begin{equation} \label{eq:timevarcongcsouter}
\left\{
		\begin{array}{cl}
			 c'_i =c_i+t_K-t_0, & \text{if } \chi_i \notin \gamma,\\
			 c'_i=0, & \text{if } \chi_i \in \gamma.
		\end{array}
\right.
\end{equation}
Moreover, the clock valuation immediately before the transition must satisfy the transition guard:
\begin{equation}\label{eq:timecosgcsouter}
 c_i+t_K-t_0 \in \theta(\chi_i),
 \quad i=1,2,\dots,|C|.
\end{equation}
Since each $\theta(\chi_i)$ is a closed interval, the constraints in \eqref{eq:timecosgcsouter} are convex.
\end{itemize}

\textbf{(5) Source and target constraints:}
The virtual source and target vertices do not carry continuous trajectory variables.
Instead, they impose boundary constraints on their adjacent JTS vertices.
\begin{itemize}
    \item 
    For an edge $e=(s,q)\in\mathcal{E}^s$ with $q\in Q_0$, we impose
\begin{equation}\label{eq:edgeconsforsource}
  \nu_0=x_0,
  \quad
  t_0=0,\quad
  c_i=0,\quad i=1,2,\dots,|C|.
\end{equation}
Thus, the trajectory starts from the prescribed initial state $x_0$ at time $0$, and all clock variables are initialized to zero.
    \item 
For an edge $e=(q,t)\in\mathcal{E}^t$ with $q\in Q_F$, we impose
\begin{equation}\label{eq:edgeconsfortarget}
  t_K=T.
\end{equation}
Hence, the reconstructed trajectory is defined over the entire time horizon $[0,T]$.
\end{itemize}

The use of virtual source and target vertices has two advantages.
First, it provides a unified way to handle multiple initial TA states, multiple convex regions containing the initial state, and multiple accepting JTS states.
Second, the vertex and edge constraints associated with the JTS are independent of the initial state.
Thus, when the initial state changes, only the constraints associated with the source edges need to be modified.

\textbf{(6) Length function and trajectory reconstruction:}
Finally, we define the edge length function
$l_e:\mathcal{X}_e\to\mathbb{R}_{\geq 0}$.
A common choice is the length of the B\'ezier control polygon:
\begin{equation}\label{eq:lengthfunction}
   l_e(x_q,x_{q'})
   =
   \sum_{k=0}^{K-1}
   \left\Vert \nu_{k+1}-\nu_k \right\Vert .
\end{equation}
If the $\ell_2$ norm is used, then \eqref{eq:lengthfunction} gives a convex upper approximation of the actual curve length, and the resulting convex programs are second-order cone programs (SOCPs)~\cite{marcucci2024shortest}.
If the $\ell_1$ norm is used, the approximation is generally looser but the resulting programs become linear programs (LPs), which are more suitable for large-scale instances.
Since derivatives of B\'ezier curves are also B\'ezier curves, derivative-dependent costs can be incorporated in the same manner.

After constructing the GCS in \eqref{eq:GCSfromTS}, we solve the corresponding shortest-path problem in \eqref{eq:gcs}.
Suppose that the solution returns a path
\[
\mathbf{p}=s q_1q_2\dots q_n t
\]
from the virtual source $s$ to the virtual target $t$.
For each $i=1,2,\dots,n$, the solution provides a point
\begin{equation}\label{eq:continuous pointfromsolution}
    x_{q_i}
    =
    (\nu_0^i,\nu_1^i,\dots,\nu_K^i,
    t_0^i,t_1^i,\dots,t_K^i,
    c_1^i,c_2^i,\dots,c_{|C|}^i).
\end{equation}
Let $\Gamma_{q_i}^r$ and $\Gamma_{q_i}^h$ be the B\'ezier curves with control points
$\nu_0^i,\nu_1^i,\dots,\nu_K^i$
and
$t_0^i,t_1^i,\dots,t_K^i$,
respectively.
The continuous-time B\'ezier-spline trajectory is reconstructed as
\begin{equation}\label{eq:reconstruction}
 \xi(t)
 =
 \Gamma_{q_i}^r\circ(\Gamma_{q_i}^h)^{-1}(t),
 \quad
 \text{if } t \in [t_0^i,t_K^i].
\end{equation}
This reconstructed trajectory is the candidate solution to the original STL motion-planning problem.

\subsection{Theoretical Analysis of the GCS}\label{subsection:GCStheoreticalresult}

We now analyze the theoretical properties of the GCS formulation constructed in Section~\ref{subsection:construct GCS}.
We first establish soundness: every feasible solution of the GCS shortest-path problem yields a trajectory that satisfies the original STL motion-planning problem.

\begin{mythm}\label{thm:soundness}
Let $T\in\mathbb{R}_{\geq0}$ be the time horizon.
Given an STL task $\Phi$ in \eqref{eq:generalSTL}, let $A_\Phi$ be a TA in \eqref{eq:TAdef} such that $A_\Phi\models \Phi$.
Let $\mathbb{D}$ be a convex decomposition of the robot configuration space, and let $\mathbb{T}$ be the JTS in \eqref{eq:transitionsystemdef} constructed from $A_\Phi$ and $\mathbb{D}$.
For an initial state $x_0\in\mathbb{R}^n$, a convex velocity bound $\textsf{vel}\subseteq\mathbb{R}^n$, a smoothness order $d\in\mathbb{Z}_{\geq0}$, and a B\'ezier degree $K\in\mathbb{Z}_{\geq0}$ with $K\geq d$, let $\mathcal{G}$ be the GCS in \eqref{eq:GCSfromTS}.
Suppose that the shortest-path problem \eqref{eq:gcs} associated with $\mathcal{G}$ is feasible, and let $\xi_{\mathcal{G}}:[0,T]\to\mathbb{R}^n$ be the trajectory reconstructed from \eqref{eq:reconstruction}.
Then $\xi_{\mathcal{G}}$ is a feasible solution to Problem~\ref{prob:main}.
\end{mythm}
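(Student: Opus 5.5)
The plan is to take a feasible GCS path $\mathbf{p}=s\,q_1q_2\cdots q_n\,t$ with points $x_{q_i}$ as in \eqref{eq:continuous pointfromsolution}, and verify the constraints \eqref{eq:STLconsinopt}--\eqref{eq:velocityconstraint} for $\xi_{\mathcal{G}}$ one at a time. The STL constraint will come from exhibiting an accepting run of $A_\Phi$ that induces $\xi_{\mathcal{G}}$, and then invoking $A_\Phi\models\Phi$ via \eqref{eq:desired}.

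\textbf{Well-definedness and the pointwise constraints.} By \eqref{eq:invertiblecons}, the time control points are strictly increasing. Hence, by \eqref{eq:bezierderi}, the control points of $\dot\Gamma^h_{q_i}$ are positive, and by \eqref{eq:bezierconvex} so is $\dot\Gamma^h_{q_i}$ itself. Each $\Gamma^h_{q_i}$ is therefore a strictly increasing bijection $[0,1]\to[t_0^i,t_K^i]$ with a smooth inverse. The $m=0$ case of \eqref{eq:differentiablecons} gives $t_K^i=t_0^{i+1}$ and $\nu_K^i=\nu_0^{i+1}$. Together with $t_0^1=0$ from \eqref{eq:edgeconsforsource} and $t_K^n=T$ from \eqref{eq:edgeconsfortarget}, this shows the intervals $[t_0^i,t_K^i]$ tile $[0,T]$. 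The definition \eqref{eq:reconstruction} then agrees at the break points, and $\xi_{\mathcal{G}}(0)=\nu_0^1=x_0$ by \eqref{eq:beziercontrol}.

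For smoothness, the higher-order cases of \eqref{eq:differentiablecons}, combined with \eqref{eq:beziercontrol} applied to the derivative curves, give matching derivatives of $\Gamma^r$ and $\Gamma^h$ up to order $d$ at the junctions. The chain rule and the inverse function theorem (Fa\`a di Bruno) then yield that the derivatives of $\xi=\Gamma^r\circ(\Gamma^h)^{-1}$ up to order $d$ agree as well, so $\xi_{\mathcal{G}}\in\mathcal{C}^d_n(T)$.

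For the velocity bound, $\dot\xi=\dot\Gamma^r/\dot\Gamma^h$. Writing both derivatives in Bernstein form via \eqref{eq:bezierderi}, we have $\dot\Gamma^r(s)=\sum_k\beta_k^{K-1}(s)\,K(\nu_{k+1}-\nu_k)$ and $\dot\Gamma^h(s)=\sum_k\beta_k^{K-1}(s)\,K(t_{k+1}-t_k)$. Hence $\dot\xi$ is a convex combination of the ratios $(\nu_{k+1}-\nu_k)/(t_{k+1}-t_k)$, with positive weights $\beta_k^{K-1}(s)(t_{k+1}-t_k)/\sum_j\beta_j^{K-1}(s)(t_{j+1}-t_j)$. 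By \eqref{eq:velocityconsforgcs} and convexity of $\textsf{vel}$, $\dot\xi\in\textsf{vel}$.

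\textbf{STL satisfaction (main step).} Write $q_i=(s_i,D_i)$. Collapse maximal blocks of consecutive inner edges, whose TA state is constant, into single TA states; outer edges become TA transitions. The result is a sequence $\tilde s_0,\dots,\tilde s_m$ with entry times $\tilde t_j$, namely the $t_0$ of the first vertex of each block. We have $\tilde s_0\in S_0$ because $q_1\in Q_0$, and $\tilde s_m\in S_F$ because $q_n\in Q_F$. Each consecutive pair satisfies $\delta(\tilde s_{j-1},\tilde s_j)=(\theta_j,\gamma_j)$ by the definition of $\delta^o$.

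To verify conditions 3 and 4 of Definition~\ref{def:runofTA}, I will prove by induction along the path that the stored clock values $c^i$ equal $\sigma_{j}+(t_0^i-\tilde t_j)$ whenever $q_i$ lies in block $j$. The base case is \eqref{eq:edgeconsforsource}. Inner edges preserve the relation by \eqref{eq:timecongcsinner}, since $t_K^i-t_0^i$ is exactly the elapsed time. Outer edges enforce the guard through \eqref{eq:timecosgcsouter} and the reset through \eqref{eq:timevarcongcsouter}.

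The induced-trajectory condition \eqref{eq:inducedtrajectory} follows from the convex hull property. The control points $\nu_k^i$ lie in $D_i\subseteq M(s_i)$ by \eqref{eq:associatedconvexset} and the definition of $Q$, and $D_i$ is convex, so $\xi$ stays in $M(\tilde s_j)$ on each block interval, including its closed endpoints, which are the shared control points. Thus $\xi_{\mathcal{G}}\models A_\Phi$, and hence $\xi_{\mathcal{G}}\models\Phi$.

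I expect the main obstacle to be the bookkeeping in this clock-invariant induction, particularly keeping the indexing of blocks versus vertices consistent. A secondary point needing care is that the GCS definition requires paths to have distinct vertices, which is harmless here but must not be misused. The smoothness step via Fa\`a di Bruno is routine but should be stated carefully.
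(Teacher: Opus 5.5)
Your proposal is correct and follows the paper's proof essentially step for step: well-definedness via $\dot\Gamma^h>0$, smoothness by matching parameter-space derivatives of $\Gamma^r$ and $\Gamma^h$ at the junctions and then applying Fa\`a di Bruno to the inverse and the composition, the velocity bound as a convex combination of the ratios in \eqref{eq:velocityconsforgcs}, and STL satisfaction by collapsing the path into an accepting run of $A_\Phi$ that induces $\xi_{\mathcal{G}}$ via the convex-hull property. The differences are cosmetic. You carry the clock relation as an inductive invariant, where the paper telescopes the inner-edge equalities \eqref{eq:timecongcsinner}. You also delimit blocks by edge type rather than by changes of TA state, which is slightly more robust because it also covers outer edges that come from TA self-loops.
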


\begin{proof}
   Suppose that a solution to the optimization problem \eqref{eq:gcs} yields the path $\mathbf{p}=s q_0q_1\dots q_l t$.
For each vertex $q_i=(s_i,D_i)$ with $i=0,1,\dots, l$, let the associated point $x_{q_i}$ be given by
\begin{equation}\label{eq:continuous pointfromsolutionprove}
    x_{q_i}=(\nu_0^i,\nu_1^i,\dots,\nu_{K}^i,t^i_0,t^i_1,\dots, t^i_K,c^i_1,c^i_2\dots,c^i_{|C|}).
\end{equation}

Let $\Gamma_{q_i}^r$ and $\Gamma_{q_i}^h$ be the B\'ezier curve segments with control points $\nu_0^i,\nu_1^i,\dots,\nu_{K}^i$ and $t^i_0,t^i_1,\dots, t^i_K$, respectively.
For each $i=0,1,\dots,l$, both $\Gamma_{q_i}^r$ and $\Gamma_{q_i}^h$ are infinitely differentiable over $[0,1]$.
Moreover, by \eqref{eq:bezierderi} and \eqref{eq:invertiblecons}, we have $\dot{\Gamma}_{q_i}^h(\tau)>0$ for all $\tau \in [0,1]$ and all $i=0,1,\dots,l$.
Therefore, by the inverse function theorem~\cite[Appendix A]{kass2011geometrical}, the inverse function $(\Gamma_{q_i}^h)^{-1}$ is well-defined and is also infinitely differentiable.
Furthermore, by \eqref{eq:beziercontrol}, $(\Gamma_{q_i}^h)^{-1}$ maps $[t_0^i,t_K^i]$ to $[0,1]$.
The constraints \eqref{eq:edgeconsforsource} and \eqref{eq:edgeconsfortarget} ensure that the B\'ezier spline starts at time $0$ and ends at time $T$.
Hence, the reconstruction in \eqref{eq:reconstruction} is well-defined.

From \eqref{eq:edgeconsforsource}, the first control point $\nu_0^0$ of the first B\'ezier curve segment satisfies $\nu_0^0=x_0$.
Then, by the endpoint property \eqref{eq:beziercontrol} of B\'ezier curves, we have $\xi_{\mathcal{G}}(0)=x_0$. Hence, the initial condition \eqref{eq:initialcons} is satisfied.

We next show that $\xi_{\mathcal{G}}$ also satisfies the smoothness requirement in \eqref{eq:smoothconstraint}.
Since $\Gamma_{q_i}^r$ and $(\Gamma_{q_i}^h)^{-1}(t)$ are both infinitely differentiable, their composition is also infinitely differentiable.
Therefore, each segment $\Gamma_{q_i}^r\circ (\Gamma_{q_i}^h)^{-1}$ already satisfies \eqref{eq:smoothconstraint}; that is, $\xi_{\mathcal{G}}(t)$ is infinitely differentiable over $[0,T]$ except at the concatenation instants $t_0^j$ for $j=1,2,\dots,l$.
It remains to show that
\begin{equation}\label{eq:concertationdifferent}
   \left( \Gamma_{q_{i-1}}^r\circ (\Gamma_{q_{i-1}}^h)^{-1}\right)^{(m)}(t_K^{i-1}) = \left( \Gamma_{q_{i}}^r\circ (\Gamma_{q_{i}}^h)^{-1}\right)^{(m)}(t_0^{i})
\end{equation}
holds for $i=1,2,\dots,l$ and $m=0,1,\dots,d$.
Let $f_i=\Gamma^r_{q_i}$, $g_i=\Gamma^h_{q_i}$, and $y_i=g_i^{-1}$.
By the derivative property \eqref{eq:bezierderi} and the constraint \eqref{eq:differentiablecons}, we have, for $i=1,2,\dots,l$ and $m=0,1,\dots,d$,
\begin{equation}\label{eq:soundnessmiddle1}
    f_{i-1}^{(m)}(t_K^{i-1}) = f_i^{(m)}(t_0^i), \quad g_{i-1}^{(m)}(t_K^{i-1}) = g_i^{(m)}(t_0^i).
\end{equation}
We next show that, for $i=1,2,\dots,l$ and $m=0,1,\dots,d$,
\begin{equation}\label{eq:induction}
    y_{i-1}^{(m)}(t_K^{i-1}) = y_i^{(m)}(t_0^i). 
\end{equation}
We prove it by induction on $m$.
Since $y_i=g_i^{-1}$, \eqref{eq:soundnessmiddle1} immediately implies that \eqref{eq:induction} holds for $m=0$.
Moreover, since $g\circ y(x)=x$, differentiating this identity $m$ times yields an expression for $y^{(m)}$.
In particular, for $m=1$, we have
$y^{(1)}_i=\frac{1}{g^{(1)}_i(y_i(x))}$. Therefore, \eqref{eq:induction} also holds for $m=1$, since it is already true for $m=0$ and \eqref{eq:soundnessmiddle1} holds.
Now suppose that \eqref{eq:induction} holds for $m=0,1,\dots,k$ with $k\geq 1$. For $1<n = k+1\leq d$, we have
\begin{equation}
\begin{aligned}
    y^{(n)}_i&(x)=-\frac{1}{g^{(1)}_i(y_i(x))} \times \\
    &\sum_{j=2}^n g^{(j)}(y_i(x))B_{n,j}(y^{(1)}_i(x),y^{(2)}_i(x),\dots,y^{(n-j+1)}_i(x)),
\end{aligned} \nonumber
\end{equation}
where $B_{n,j}$ is the Bell polynomials.
This shows that $y^{(n)}_i$ can be expressed in terms of $g_i,g^{(1)}_i,\dots,g^{(n)}_i,y_i,y^{(1)}_i,\dots,y^{(n-1)}_i$. Therefore, by the induction hypothesis and \eqref{eq:soundnessmiddle1}, \eqref{eq:induction} also holds for $n$. Hence, \eqref{eq:induction} is true.
Now consider $h_i=f_i\circ y_i$. Then
\[
h_i^{(n)}\!(x)\!=\!\!\!\sum_{j=1}^n f^{(j)}_i\!(y_i(x))B_{n,j}(y^{(1)}_i\!(x),y^{(2)}_i\!(x),\cdot\cdot,y^{(n-j+1)}_i\!(x))
\]
for $i=0,1,\dots,l$ and $n\in \mathbb{Z}_{\geq0}$, where $B_{n,j}$ is the Bell polynomial.
Thus, $h_i^{(n)}$ can be expressed in terms of $f_i,f^{(1)}_i,\dots,f^{(n)}_i,y_i,y^{(1)}_i,\dots,y^{(n)}_i$.
Therefore, \eqref{eq:soundnessmiddle1} and \eqref{eq:induction} imply that \eqref{eq:concertationdifferent} holds. Hence, $\xi_\mathcal{G}$ satisfies the smoothness constraint \eqref{eq:smoothconstraint}.

We now turn to the convex velocity constraint \eqref{eq:velocityconstraint}.
For $t \in [t_0^i,t_K^i]$ with $i=0,1,\dots,l$, we have
\[
\begin{aligned}
    \dot{\xi}_{\mathcal{G}}(t)=&\left(\Gamma_{q_i}^r\circ (\Gamma_{q_i}^h)^{-1}\right)^{(1)}(t)
   =\frac{\dot{\Gamma}_{q_i}^r(\tau)}{\dot{\Gamma}_{q_i}^h(\tau)}\\
  =& \frac{\sum_{k=0}^{K-1} \beta_k^{K-1} (\tau)(\nu^i_{k+1}-\nu^i_{k})}{\sum_{k=0}^{K-1} \beta_k^{K-1} (\tau)(t^i_{k+1}-t^i_{k})}=\sum_{k=0}^{K-1}w_k\frac{\nu^i_{k+1}-\nu^i_{k}}{t^i_{k+1}-t^i_{k}}
\end{aligned}
\]
where $\tau=(\Gamma_{q_i}^h)^{-1}(t)$ and $w_k=\frac{\beta_k^{K-1} (\tau)(t^i_{k+1}-t^i_{k})}{\sum_{m=0}^{K-1} \beta_m^{K-1} (\tau)(t^i_{m+1}-t^i_{m})}$.
Here, the second equality follows from the inverse function theorem, and the third equality follows from \eqref{eq:bez}.
Since $\beta_k^{K-1}(\tau)\geq 0$ for $k=0,1,\dots,K-1$ and $t^i_{m+1}-t_m^i>0$ by \eqref{eq:invertiblecons}, we have $w_k\geq 0$ for $k=0,1,\dots,K-1$.
Therefore, the coefficients $w_k$ form a convex combination, and hence
\[
\dot{\xi}_{\mathcal{G}}(t) \in \texttt{conv}(\frac{\nu^i_{1}-\nu^i_{0}}{t^i_{1}-t^i_{0}},\frac{\nu^i_{2}-\nu^i_{1}}{t^i_{2}-t^i_{1}},\dots, \frac{\nu^i_{K}-\nu^i_{K-1}}{t^i_{K}-t^i_{K-1}} ).
\]
Since \textsf{vel} is convex and \eqref{eq:velocityconsforgcs} holds, we conclude that $\dot{\xi}_{\mathcal{G}}(t) \in \textsf{vel}$ for all $t \in [0,T]$. That is, $\xi_{\mathcal{G}}$ satisfies \eqref{eq:velocityconstraint}.

Finally, we consider the STL constraint \eqref{eq:STLconsinopt}.
Let $s_i$ and $D_i$ denote the TA state and convex set associated with the vertex $q_i$, that is, $q_i=(s_i,D_i)$ for $i=0,1,\dots,l$.
Let $\hat{s}_0\hat{s}_1\dots \hat{s}_k$ be the reduced TA state sequence such that $\hat{s}_{m-1} \neq \hat{s}_{m}$ for $m=1,2,\dots,k$, and suppose that there exist indices $i[0]=0 < i[1]<\dots <i[k] \leq l=i[k+1]-1$ such that, for each $j=0,1,\dots,k$,
\begin{equation}\label{eq:statetoeliminatestate}
    s_i = \hat{s}_{j}, \ i[j] \leq i \leq i[j+1]-1.
\end{equation}
For $j = 1,2,\dots, k$, let $(\theta_j,\gamma_j)=\delta(\hat{s}_{j-1},\hat{s}_j)$, and define
\begin{equation}\label{eq:soundnessdef}
    \begin{aligned}
    &I_j=[t^{i[j]}_0, t^{i[j+1]}_0], \\
        &\sigma_j(c_m)=c_{m}^{i[j]}, \quad \forall m=1,2,\dots |C|
    \end{aligned}
\end{equation}
for $j=0,1,\dots,k$, where $t^{i[k+1]}_0=t_{K}^{l}$.
We now prove that the run $\rho$ of the form
\begin{equation}\label{eq:timeautoruningcs}
            \xrightarrow[\sigma_{0}]{} (\hat{s}_0,I_0)\xrightarrow[\sigma_1]{\theta_1,\gamma_1} (\hat{s}_1,I_1)\xrightarrow[\sigma_2]{\theta_2,\gamma_2}\cdots \xrightarrow[\sigma_{k}]{\theta_k,\gamma_k} (\hat{s}_k,I_k)
    \end{equation}
is well-defined in the TA $A_\Phi$.

By the definition of $\mathcal{E}^s$, we have $q_0 \in Q_0$, and thus $\hat{s}_0=s_0 \in S_0$, where $S_0$ is the set of initial states of the TA $A_\Phi$.
Moreover, by \eqref{eq:edgeconsfortarget} and \eqref{eq:edgeconsforsource}, we have $t_0^{i[k+1]}=t_K^l=T$ and $t_0^{i[0]}=0$, respectively.
In addition, \eqref{eq:differentiablecons} and \eqref{eq:invertiblecons} imply that $t_{0}^{i[j]} \leq t_{0}^{i[j+1]}$ for $j=0,1,\dots,k$.
Therefore, the interval $I_j$ in \eqref{eq:soundnessdef} is well-defined in the sense of Definition~\ref{def:runofTA}.
Finally, \eqref{eq:edgeconsforsource} implies that $\sigma_0(c_m)=c_m^{i[0]}=0$ for all $m=1,2,\dots,|C|$.

To prove that \eqref{eq:timeautoruningcs} is well-defined, it remains to show that \eqref{eq:soundnessdef} satisfies conditions 3) and 4) in Definition~\ref{def:runofTA} for $j=1,2,\dots,k$.
From \eqref{eq:statetoeliminatestate} we know that $s_i = \hat{s}_{j-1},  i[j-1] \leq i \leq i[j]-1$, i.e., $(q_i,q_{i+1}) \in \mathcal{E}^i$ for $i[j-1] \leq i \leq i[j]-2$.
Then from \eqref{eq:edgeconsforsource}, we have
\[
c_m^i+t_K^i-t_0^i=c^{i+1}_m,\forall m=1,2,\dots,|C|, i[j-1] \leq i \leq i[j]-2.
\]
By summing the above equation from $i=i[j-1]$ to $i=i[j]-2$ and using $t_K^{i[j]-2}=t_0^{i[j]-1}$ from \eqref{eq:timecongcsinner}, we obtain
\begin{equation}\label{eq:soundnessinner}
    c^{i[j-1]}_{m} + t_0^{i[j]-1} - t_0^{i[j-1]}=c_m^{i[j]-1}, \quad m=1,2,\dots,|C|.
\end{equation}
Since $s_{i[j]-1} \neq s_{i[j]}$, we have $(q_{i[j]-1},q_{i[j]}) \in \mathcal{E}^o$.
Then, by \eqref{eq:timevarcongcsouter}, we have for $m=1,2,\dots,|C|$,
\begin{equation}\label{eq:soundnessreset}
    \left\{
		\begin{array}{cl}
			 c_m^{i[j]} =c_m^{i[j]-1}+t_K^{i[j]-1}-t_0^{i[j]-1},& \text{ if } \chi_m \notin \gamma_{j} \\
			 c_m^{i[j]}=0, &\text{ if } \chi_m \in \gamma_j  
		\end{array}.
		\right.
\end{equation}
Therefore, for $\chi_m \notin \gamma_j$, we have
\begin{equation}\label{eq:soundesstimecont}
    \begin{aligned}
     \sigma_{j}&(c_m)=c_{m}^{i[j]}=c_m^{i[j]-1}+t_K^{i[j]-1}-t_0^{i[j]-1}
     \\
     &=c_m^{i[j-1]}+t_K^{i[j]-1}-t_0^{i[j-1]} = \sigma_{j-1}(c_m)+t_0^{i[j]}-t_0^{i[j-1]},
\end{aligned}
\end{equation}
where the first and last equalities follow from \eqref{eq:soundnessdef} and \eqref{eq:differentiablecons}, respectively, and the second and third equalities follow from \eqref{eq:soundnessreset} and \eqref{eq:soundnessinner}, respectively.
Moreover, \eqref{eq:soundnessreset} implies that $\sigma_{j}(c_m)=c_m^{i[j]}=0$ for $\chi_m \in \gamma_j$.
Combining this with \eqref{eq:soundesstimecont}, we conclude that condition 4) of Definition~\ref{def:runofTA} is satisfied for \eqref{eq:timeautoruningcs}.

Since $(q_{i[j]-1},q_{i[j]}) \in \mathcal{E}^o$, \eqref{eq:timecosgcsouter} implies that
\[
 c_m^{i[j]-1}+t_K^{i[j]-1}-t_0^{i[j]-1}\in \theta_m(c_m),\quad m=1,2,\dots,|C|.   
\]
Combining this with \eqref{eq:soundnessinner} and the identity $t_K^{i[j]-1}=t_0^{i[j]}$, we obtain
\begin{equation}
c^{i[j-1]}_{m} +t_0^{i[j]} - t_0^{i[j-1]} \in \theta_m(c_m),\quad m=1,2,\dots,|C|. \nonumber
\end{equation}
Therefore, condition 3) of Definition~\ref{def:runofTA} is satisfied for \eqref{eq:timeautoruningcs}.
Hence, \eqref{eq:timeautoruningcs} is a well-defined run in the TA $A_\Phi$.
Moreover, by the definition of $\mathcal{E}^t$, we have $\hat{s}_k \in S_F$. That is, the run $\rho$ in \eqref{eq:timeautoruningcs} is an accepting run of the TA $A_\Phi$.

We now show that $\xi_\mathcal{G}$ is induced by the run $\rho$. For each $i=0,1,\dots,l$, the control points of the B\'ezier curve segment $\Gamma_{q_i}^r$ lie in $D_i$. Hence, by \eqref{eq:bezierconvex}, we have $\Gamma_{q_i}^r(\tau) \in D_i$ for all $\tau \in [0,1]$.
Moreover, by the definition of the JTS in \eqref{eq:transitionsystemdef}, we have $D_i \subseteq M(s_i)$ for every $i=0,1,\dots,l$.
Combining this with \eqref{eq:reconstruction}, we obtain
\[
\xi_\mathcal{G}(t) \in D_i \subseteq M(s_i), \quad \forall  t \in [t_0^i, t_K^i],\ i=0,1,\dots,l.
\]
Combining this with \eqref{eq:statetoeliminatestate}, we get
\[
\xi_\mathcal{G}(t) \in M(\hat{s}_j), \quad \forall  t \in [t_0^{i[j]}, t_0^{i[j+1]}],\ j=0,1,\dots,k.
\]
Since $\rho$ in \eqref{eq:timeautoruningcs} is accepting, \eqref{eq:inducedtrajectory} implies that $\xi_{\mathcal{G}} \models A_\Phi$.
Therefore, $\xi_{\mathcal{G}} \models \Phi$, which completes the proof.
\end{proof}

Theorem~\ref{thm:soundness} shows that the proposed GCS formulation is sound: any feasible solution of the GCS shortest-path problem yields a continuous-time B\'ezier-spline trajectory satisfying the STL task, the prescribed smoothness requirement, the initial condition, and the velocity bound.

We next analyze the computational complexity of the proposed formulation when the GCS shortest-path problem is solved via convex relaxation and rounding, as in~\cite{marcucci2023motion}.
In practice, as shown in Section~\ref{sec:simulation} and in~\cite{marcucci2023motion,kurtz2023temporal}, the gap between the convex relaxation and the original MICP is often small.
Moreover, in all our examples, only a few rounding iterations are sufficient to recover a feasible trajectory.
Therefore, we focus on the complexity of solving the convex relaxation.

\begin{mypro}\label{prop:complexity}
Given the GCS $\mathcal{G}$ in \eqref{eq:GCSfromTS}, the complexity of approximately solving the convex relaxation of its shortest-path problem is polynomial in the number of JTS states $|Q|$, the number of clock variables $|C|$, the configuration-space dimension $n$, and the B\'ezier degree $K$.
Equivalently, since $Q\subseteq S\times\mathbb{D}$, this complexity is polynomial in $|S|$, $|\mathbb{D}|$, $|C|$, $n$, and $K$.
\end{mypro}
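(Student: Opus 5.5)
The plan is to write out the convex relaxation explicitly, count its variables and constraints, and then invoke a standard polynomial-time guarantee for interior-point methods applied to conic programs.

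\emph{Form of the relaxation.} Following \cite{marcucci2024shortest}, the relaxation of \eqref{eq:gcs} introduces three kinds of variables:
\begin{itemize}
\item a flow variable $y_e\in[0,1]$ for each edge $e\in\mathcal{E}$;
\item for each edge $e=(u,v)$, copies $(z_e^u,z_e^v)$ that represent $y_e x_u$ and $y_e x_v$;
\item an auxiliary epigraph variable $\ell_e$ for each edge cost.
\end{itemize}
The constraints are of four types:
\begin{itemize}
\item flow conservation at every vertex, with unit outflow at $s$ and unit inflow at $t$;
\item the degree constraints of the shortest-path formulation;
\item the homogenized (perspective) versions of the vertex and edge sets, $(z_e^u,y_e)\in\widetilde{\mathcal{X}}_u$ and $(z_e^u,z_e^v,y_e)\in\widetilde{\mathcal{X}}_e$;
\item the perspective cost constraint $\tilde l_e(z_e^u,z_e^v,y_e)\le \ell_e$.
\end{itemize}
All of these are linear or second-order-cone constraints, so the relaxation is an LP or SOCP, depending on the norm in \eqref{eq:lengthfunction}.

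\emph{Counting variables and constraints.} First, bound the graph size. We have $|\mathcal{V}|=|Q|+2$. Since $\mathcal{E}^i\cup\mathcal{E}^o\subseteq Q\times Q$, $\mathcal{E}^s\subseteq\{s\}\times Q$ and $\mathcal{E}^t\subseteq Q\times\{t\}$, it follows that $|\mathcal{E}|\le |Q|^2+2|Q|$. By \eqref{eq:continuous pointinassociatedconvexset}, each vertex variable has dimension $N:=n(K+1)+(K+1)+|C|$.

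Next, bound the constraints attached to each vertex and edge, using the polyhedral form of $D$ and $\textsf{vel}$:
\begin{itemize}
\item \emph{Vertex constraints.} The membership constraints $\nu_k\in D$ give $(K+1)\cdot m_D$ linear inequalities, where $m_D$ is the number of facets. Constraint \eqref{eq:invertiblecons} contributes $K$ inequalities. Constraint \eqref{eq:velocityconsforgcs} becomes $\nu_{k+1}-\nu_k\in (t_{k+1}-t_k)\textsf{vel}$, a perspective-type cone constraint, giving $K\cdot m_{\textsf{vel}}$ conic or linear rows.
\item \emph{Edge constraints.} The smoothness equalities \eqref{eq:differentiablecons} give $(d+1)(n+1)\le (K+1)(n+1)$ rows. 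Each $m$-th derivative control point is a fixed linear combination of at most $m+1$ original points, by iterating \eqref{eq:bezierderi}. The clock constraints \eqref{eq:timecongcsinner}--\eqref{eq:timecosgcsouter} give $O(|C|)$ rows.
\item \emph{Cost constraints.} The edge cost \eqref{eq:lengthfunction} needs $K$ cones of dimension $n+1$, or $O(Kn)$ linear rows in the $\ell_1$ case.
\end{itemize}
Altogether, the relaxation has $O\big((|Q|^2+|Q|)\,(nK+|C|)\big)$ variables. Each vertex carries $O(K(m_D+m_{\textsf{vel}}))$ constraint rows, each edge carries $O(Kn+|C|)$ rows, and the facet counts $m_D,m_{\textsf{vel}}$ are fixed by the given decomposition and velocity set. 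The total constraint count is therefore polynomial in $|Q|,|C|,n,K$.

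\emph{Complexity conclusion.} Interior-point methods solve an LP or SOCP with $N_v$ variables and $N_c$ cone constraints to accuracy $\epsilon$ in $O(\sqrt{N_c}\log(1/\epsilon))$ iterations, each costing polynomial time in $N_v$ and $N_c$. Substituting the bounds above gives the first claim. The second claim follows directly from $|Q|\le|S|\,|\mathbb{D}|$.

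\emph{Main obstacle.} The delicate step is checking that the perspective homogenization of every vertex and edge set stays a convex, tractable cone of polynomial size. This matters most for \eqref{eq:velocityconsforgcs}, which is bilinear-looking, and for the strict inequality \eqref{eq:invertiblecons}. For the velocity constraint, I would first rewrite it as $\nu_{k+1}-\nu_k\in(t_{k+1}-t_k)\textsf{vel}$, which is already homogeneous in the vertex variables and therefore keeps the same size after homogenization by $y_e$. For the strict inequality, I would replace it by the $\varepsilon$-version mentioned after \eqref{eq:invertiblecons}, so that all sets are closed and the interior-point guarantees apply. I would also state explicitly that the facet counts $m_D$ and $m_{\textsf{vel}}$ are treated as part of the fixed input description.
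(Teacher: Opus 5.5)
Your proposal is correct and follows the paper's own argument: count the variables and constraints of the LP/SOCP relaxation, bound $|\mathcal{E}^i|+|\mathcal{E}^o|$ by $|Q|^2$, invoke polynomial-time interior-point complexity, and finish with $|Q|\le|S||\mathbb{D}|$. Your version is somewhat more careful than the paper's in two places: you count the edge-lifted perspective variables that the relaxation actually uses, whereas the paper counts only the original vertex variables $|Q|\big((n+1)(K+1)+|C|\big)$; and you state explicitly that the facet counts of the regions $D$ and of $\textsf{vel}$ are part of the fixed input description, a point the paper leaves implicit.
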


\begin{proof}
As discussed in \eqref{eq:lengthfunction}, the convex relaxation is an LP when the $\ell_1$ norm is used and an SOCP when the $\ell_2$ norm is used.
By standard interior-point complexity results~\cite{nesterov1994interior}, LPs and SOCPs can be approximately solved in time polynomial in the number of decision variables and constraints.
It therefore suffices to count how these quantities scale.

For each JTS state $q\in Q$, the associated variable $x_q$ contains $(K+1)$ geometric control points in $\mathbb{R}^n$, $(K+1)$ timing control points, and $|C|$ clock values.
Hence, the total number of decision variables is
$|Q|\big((n+1)(K+1)+|C|\big)$.
The vertex constraints include the monotonicity constraints in \eqref{eq:invertiblecons} and the velocity constraints in \eqref{eq:velocityconsforgcs}, whose number scales polynomially with $|Q|$, $K$, and $n$.
For each edge in $\mathcal{E}^i\cup\mathcal{E}^o$, the smoothness constraints in \eqref{eq:differentiablecons} contribute a number of affine constraints polynomial in $K$, $d$, and $n$, while the clock-evolution and guard constraints in \eqref{eq:timecongcsinner}--\eqref{eq:timecosgcsouter} contribute a number of constraints polynomial in $|C|$.
Thus, the total number of constraints is polynomial in
\[
|Q|,\quad |\mathcal{E}^i|+|\mathcal{E}^o|,\quad |C|,\quad n,\quad K.
\]
Since $|\mathcal{E}^i|+|\mathcal{E}^o|\leq |Q|^2$, the number of constraints is polynomial in $|Q|$, $|C|$, $n$, and $K$.
Therefore, the convex relaxation can be approximately solved in polynomial time with respect to these quantities.
Finally, because $Q\subseteq S\times\mathbb{D}$, we have $|Q|\leq |S||\mathbb{D}|$.
This proves the claim.
\end{proof}

Proposition~\ref{prop:complexity} shows that, once the TA and the convex decomposition are fixed, the resulting convex optimization problem scales polynomially with the configuration-space dimension and the B\'ezier degree.
This property is particularly important for high-dimensional robotic systems, where direct time-discretized mixed-integer STL encodings often become computationally expensive.

\begin{remark}
A desirable property of a planning method is completeness, namely, that the method returns a solution whenever Problem~\ref{prob:main} is feasible.
However, the proposed approach is not complete in this full sense as the following cases may occur. 
First, a feasible trajectory $\xi$ satisfying the STL formula $\Phi$  may not be accepted by the chosen TA $A_\Phi$, i.e., $\xi\models\Phi$ while $\xi\not\models A_\Phi$.
Second, a feasible trajectory may leave the constructed convex decomposition $\mathbb{D}$.
Also,  a feasible trajectory may not admit a B\'ezier-spline representation with the prescribed degree $K$ and the prescribed segmentation. 
Therefore, the proposed GCS optimization is complete only with respect to the restricted class of trajectories that are accepted by the TA, lie inside the chosen convex decomposition, and admit the prescribed B\'ezier-spline parameterization satisfying the imposed control-point constraints.
Moreover, when the solution is obtained through convex relaxation and rounding, the rounding procedure may fail to recover a feasible integer path even if one exists.
Nevertheless, as shown in the experiments, this relaxation-and-rounding strategy is effective in practice.
\end{remark} 
 
\section{Efficient Timed Automata Construction for an Expressive STL Fragment}
\label{sec:timeautomaton}

In principle, a timed automaton can be constructed for a general STL formula by using standard STL-to-TA translations, such as that in~\cite{alur1996benefits}.
However, directly using such general-purpose constructions in the proposed GCS framework is challenging for two reasons.
First, standard constructions are designed to handle arbitrary nesting of temporal operators.
As a result, the automaton must retain sufficient timing information to support future temporal compositions, which can lead to a very large state space even for seemingly simple specifications.
Second, the size of the resulting automaton may depend not only on the syntactic size of the formula, but also on the temporal constants appearing in the formula.
This dependence is particularly undesirable for long-horizon robotic tasks, where large time intervals are common.

To improve scalability, this section identifies a restricted but still expressive STL fragment that covers many robotic motion-planning tasks, including reachability, invariance, reach-and-stay, recurrence, and their Boolean combinations.
For each temporal pattern in this fragment, we directly provide a compact TA template.
General formulae in the fragment are then handled by composing these templates through union and product constructions, which correspond to disjunction and conjunction, respectively.
In this way, compositional construction is only needed for Boolean operators, while temporal operators are handled directly by templates.

\subsection{Considered STL Fragment}
\label{subsection:STLfragment}

We consider the following fragment of STL formulae:
\begin{subequations} \label{eq:consideredstl}
	\begin{align}
    \varphi
    &::= \pi^\mu
    \mid \neg \pi^\mu
    \mid \varphi_1 \wedge \varphi_2,
    \label{eq:firstclassstl}  \\
	\phi
    &::= \mathbf{F}_{[a,b]} \varphi
    \mid \mathbf{G}_{[a,b]} \varphi
    \mid \varphi_1\mathbf{U}_{[a,b]}\varphi_2
    \nonumber\\
    &\quad
    \mid \mathbf{F}_{[a,b]}\mathbf{G}_{[c,d]}\varphi
    \mid \mathbf{G}_{[a,b]}\mathbf{F}_{[c,d]}\varphi,
    \label{eq:secondclassstl} \\
	\Phi
    &::= \phi
    \mid \Phi_1 \wedge \Phi_2
    \mid \Phi_1 \vee \Phi_2.
    \label{eq:thirdclassstl}
	\end{align}
\end{subequations}
Here, $\varphi_1$ and $\varphi_2$ are formulae of class $\varphi$, while $\Phi_1$ and $\Phi_2$ are formulae of class $\Phi$.
Formulae in class $\varphi$ describe time-independent Boolean combinations of atomic predicates.
Formulae in class $\phi$ describe elementary temporal patterns applied to such Boolean predicates.
Finally, formulae in class $\Phi$ are obtained by taking conjunctions and disjunctions of elementary temporal specifications.

The fragment in \eqref{eq:consideredstl} is expressive enough to encode many common robotic tasks.
For example, $\mathbf{F}_{[a,b]}\varphi$ represents reaching a desired region within a time window;
$\mathbf{G}_{[a,b]}\varphi$ represents remaining in a safe region over a time window;
$\varphi_1\mathbf{U}_{[a,b]}\varphi_2$ represents maintaining one condition until another condition is reached;
$\mathbf{F}_{[a,b]}\mathbf{G}_{[c,d]}\varphi$ represents reaching a region and staying there for a prescribed duration; and
$\mathbf{G}_{[a,b]}\mathbf{F}_{[c,d]}\varphi$ represents recurrent satisfaction within a sliding time window.
However, this fragment does not include arbitrary nesting of temporal operators.
For instance, formulae such as
$\mathbf{G}_{[a,b]}\big(\pi^{\mu_1}\wedge
\mathbf{F}_{[c,d]}\pi^{\mu_2}\big)$ 
are not directly included in \eqref{eq:consideredstl}.
This restriction is intentional: it allows each temporal pattern in \eqref{eq:secondclassstl} to be handled by a small dedicated TA template, thereby avoiding the state-space explosion caused by repeatedly composing temporal operators.

\subsection{Timed Automata Templates and Boolean Composition}
\label{subsection:construction of TA}

In this subsection, we first introduce compact TA templates for the elementary temporal formulae $\phi$ in \eqref{eq:secondclassstl}.
We then define two Boolean composition operators, namely union and product, which are used to construct TAs for formulae in \eqref{eq:thirdclassstl}.

We begin with several common conventions used throughout this subsection.
Let $\Omega=\mathbb{R}^n$ denote the whole configuration space.
For a Boolean formula $\varphi$ in \eqref{eq:firstclassstl}, let $[\varphi]$ denote its satisfaction region.
Since the subsequent GCS construction requires convex regions, the regions used in the TA templates may be conservative subsets of satisfaction regions.
Specifically, we use $R_\varphi\subseteq[\varphi]$ to denote a region in which $\varphi$ is guaranteed to hold.
For conjunctions, we similarly use
$R_{\varphi_1\wedge\varphi_2}\subseteq[\varphi_1\wedge\varphi_2]$.
For the template of $\mathbf{G}_{[a,b]}\mathbf{F}_{[c,d]}\varphi$, we additionally use a region $R_{\neg\varphi}$ satisfying $[\neg\varphi]\subseteq R_{\neg\varphi}$, which represents a region where $\varphi$ is not enforced.

We use $\top_C$ to denote the unconstrained guard over the clock set $C$.
The clock $\kappa$ records the global physical time and is never reset.
For nested temporal templates, an auxiliary clock $\eta$ is introduced to measure the duration associated with the inner temporal requirement.
All states not explicitly specified as accepting are non-accepting.

The elementary TA templates are summarized as follows. 

\textbf{1) Eventually:} $\phi=\mathbf{F}_{[a,b]}\varphi$. \quad 
Let $A_\phi=(S,S_0,R,C,\delta,$ $M,S_F)$, where
$S=\{s_0,s_1,s_2\}$, $S_0=\{s_0\}$, $S_F=\{s_1,s_2\}$,
$R=\{\Omega,R_\varphi\}$, and $C=\{\kappa\}$.
The region map is $M(s_0)=M(s_2)=\Omega$ and $M(s_1)=R_\varphi$.
The transitions are
\[
\delta(s_0,s_1)=(a\leq\kappa\leq b,\emptyset),
\qquad
\delta(s_1,s_2)=(\top_C,\emptyset).
\]

\textbf{2) Always:} $\phi=\mathbf{G}_{[a,b]}\varphi$.\quad 
Let $A_\phi=(S,S_0,R,C,\delta,M,$ $S_F)$, where
$S=\{s_0,s_1,s_2\}$, $S_0=\{s_0\}$, $S_F=\{s_2\}$,
$R=\{\Omega,R_\varphi\}$, and $C=\{\kappa\}$.
The region map is $M(s_0)=M(s_2)=\Omega$ and $M(s_1)=R_\varphi$.
The transitions are
\[
\delta(s_0,s_1)=(\kappa\leq a,\emptyset),
\qquad
\delta(s_1,s_2)=(\kappa\geq b,\emptyset).
\]

 \textbf{3) Until:} $\phi=\varphi_1\mathbf{U}_{[a,b]}\varphi_2$. \quad
Let $A_\phi=(S,S_0,R,C,\delta,$ $M,S_F)$, where
$S=\{s_0,s_1,s_2\}$, $S_0=\{s_0\}$, $S_F=\{s_1,s_2\}$,
$C=\{\kappa\}$, and
$R=\{\Omega,R_{\varphi_1},R_{\varphi_1\wedge\varphi_2}\}$.
The region map is
$M(s_0)=R_{\varphi_1}$, $M(s_1)=R_{\varphi_1\wedge\varphi_2}$, and $M(s_2)=\Omega$.
The transitions are
\[
\delta(s_0,s_1)=(a\leq\kappa\leq b,\emptyset),
\qquad
\delta(s_1,s_2)=(\top_C,\emptyset).
\]

\textbf{4) Eventually-always:} $\phi\!=\!\mathbf{F}_{[a,b]}\mathbf{G}_{[c,d]}\varphi$. \quad
Let $A_\phi\!=(S,S_0,R,$ $C,\delta,M,S_F)$, where
$S=\{s_0,s_1,s_2\}$, $S_0=\{s_0\}$, $S_F=\{s_2\}$,
$R=\{\Omega,R_\varphi\}$, and $C=\{\kappa,\eta\}$.
The region map is $M(s_0)\!=\!M(s_2)\!=\!\Omega$ and $M(s_1)\!=\!R_\varphi$.
The transitions are
\[
\delta(s_0,s_1)\!=\!(a+c\!\leq\!\kappa\!\leq\! b+c,\{\eta\}), \ 
\delta(s_1,s_2)\!=\!(\eta\geq d-c,\emptyset).
\]

\textbf{5) Always-eventually:} $\phi=\mathbf{G}_{[a,b]}\mathbf{F}_{[c,d]}\varphi$.\quad
Let $A_\phi=(S,S_0,R,$ $C,\delta,M,S_F)$, where
$S=\{s_0,s_1,s_2,s_3,s_4\}$, $S_0=\{s_0\}$, $S_F=\{s_4\}$,
$R=\{\Omega,R_\varphi,R_{\neg\varphi}\}$, and $C=\{\kappa,\eta\}$.
The region map is
$M(s_0)=M(s_1)=M(s_4)=\Omega$, 
$M(s_2)=R_\varphi$, and
$M(s_3)=R_{\neg\varphi}$. 
The transitions are
\[
\begin{aligned}
&\delta(s_0,s_1)=(\kappa\leq a+c,\{\eta\}),\quad
\delta(s_1,s_2)=(\eta\leq d-c,\emptyset),\\
&\delta(s_2,s_3)=(\top_C,\{\eta\}),\quad
\delta(s_3,s_2)=(\eta\leq d-c,\emptyset),\\
& \delta(s_2,s_4)=(\kappa\geq b+c,\emptyset).
\end{aligned}
\]

The intuition behind these templates is as follows.
For $\mathbf{F}_{[a,b]}\varphi$, the automaton enters a $\varphi$-enforcing state at some time within $[a,b]$.
For $\mathbf{G}_{[a,b]}\varphi$, the automaton enters the $\varphi$-enforcing state no later than $a$ and leaves it no earlier than $b$.
For $\varphi_1\mathbf{U}_{[a,b]}\varphi_2$, the state $s_0$ enforces $\varphi_1$ until a switching time in $[a,b]$, and the state $s_1$ certifies that $\varphi_2$ also holds at that switching time.
For $\mathbf{F}_{[a,b]}\mathbf{G}_{[c,d]}\varphi$, the first transition selects the start of an interval on which $\varphi$ must hold, while the auxiliary clock $\eta$ enforces the required dwell time $d-c$.
For $\mathbf{G}_{[a,b]}\mathbf{F}_{[c,d]}\varphi$, the automaton repeatedly returns to the $\varphi$-enforcing state $s_2$ within every duration of length $d-c$, ensuring that each window $[\tau+c,\tau+d]$ with $\tau\in[a,b]$ contains a time instant satisfying $\varphi$.
Therefore, each elementary temporal formula in \eqref{eq:secondclassstl} is represented by a constant-size TA.
Only one clock is needed for non-nested temporal operators, and two clocks are sufficient for the nested templates $\mathbf{F}\mathbf{G}$ and $\mathbf{G}\mathbf{F}$.
The templates are illustrated in Figs.~\ref{fig:automatonGanduntil} and~\ref{fig:automatonFGandGF}.

\begin{figure}[t]
	 \subfigure[Case of $\varphi_1\mathbf{U}_{[a,b]}\varphi_2$.] 
	{\label{fig:automatonuntil}
 \begin{minipage}[b]{0.46\linewidth}
	\centering
 \includegraphics[height=0.8cm]{ 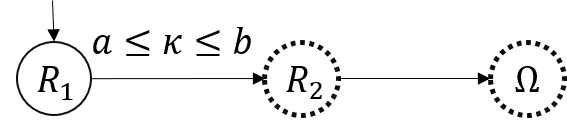}
    \end{minipage}
	}
    \subfigure[Case of $\mathbf{G}_{[a,b]}\varphi$.] 
	{\label{fig:automatonG}
 \begin{minipage}[b]{0.46\linewidth}
	\centering
 \includegraphics[height=0.8cm]{ 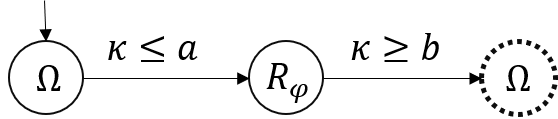}
    \end{minipage}
	}
    \caption{Timed automata for $\varphi_1\mathbf{U}_{[a,b]}\varphi_2$ and $\mathbf{G}_{[a,b]}\varphi$. Dashed states are accepting states. Here $R_1=R_{\varphi_1},R_2=R_{\varphi_1\wedge \varphi_2}$.}
   \label{fig:automatonGanduntil}
\end{figure}

\begin{figure}[t]
	 \subfigure[Case of $\mathbf{F}_{[a,b]}\mathbf{G}_{[c,d]}\varphi$.] 
	{\label{fig:automatonFG}
 \begin{minipage}[b]{0.4\linewidth}
	\centering
 \includegraphics[height=1.6cm]{ 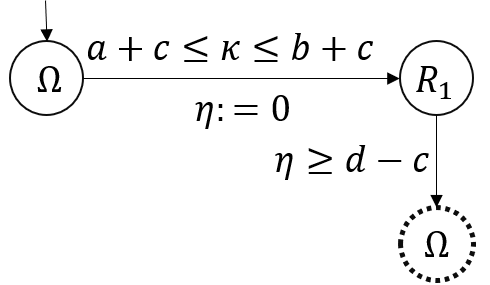}
    \end{minipage}
	}\subfigure[Case of $\mathbf{G}_{[a,b]}\mathbf{F}_{[c,d]}\varphi$.] 
	{\label{fig:automatonGF}
 \begin{minipage}[b]{0.56\linewidth}
	\centering
 \includegraphics[height=1.6cm]{ 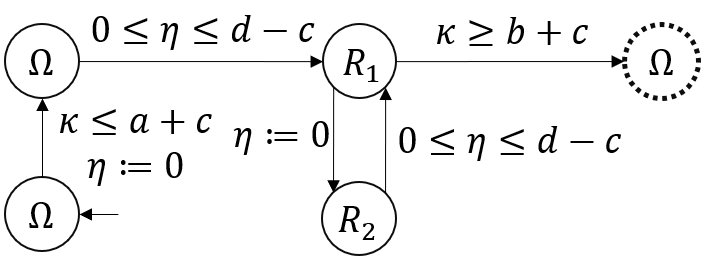}
    \end{minipage}
	}
    \caption{Timed automata for $\mathbf{F}_{[a,b]}\mathbf{G}_{[c,d]}\varphi$ and $\mathbf{G}_{[a,b]}\mathbf{F}_{[c,d]}\varphi$. Dashed states are accepting states. Here $R_1=R_\varphi$, $R_2=R_{\neg \varphi}$}
   \label{fig:automatonFGandGF}
\end{figure}

The templates above handle only the elementary temporal formulae in \eqref{eq:secondclassstl}.
To construct TAs for Boolean combinations in \eqref{eq:thirdclassstl}, we use two standard composition operators: union for disjunction and product for conjunction.
Before applying these operators, we rename states and clocks if necessary so that the automata to be composed have disjoint state sets and disjoint clock sets.

\begin{mydef}[Union of TAs]
\label{def:unionofTAs}
Given two TAs
$A_i=(S_i,S_{i,0},R_i,C_i,\delta_i,M_i,S_{i,F})$, $i=1,2$,
with $S_1\cap S_2=\emptyset$ and $C_1\cap C_2=\emptyset$,
their union $A_1\bigsqcup A_2$ is the TA
\[
A=(S_1\cup S_2, S_{1,0}\cup S_{2,0}, R_1\cup R_2, C_1\cup C_2, \delta, M, S_{1,F}\cup S_{2,F}),
\]
where, for $i=1,2$, $M(s)=M_i(s)$ for $s\in S_i$, and
$\delta(s,s')=\delta_i(s,s')$ whenever $s,s'\in S_i$ and $\delta_i(s,s')$ is defined.
No transition is added between $S_1$ and $S_2$.
\end{mydef}

\begin{mydef}[Product of TAs]
\label{def:productofTA}
Given two TAs
$A_i=(S_i,S_{i,0},R_i,C_i,\delta_i,M_i,S_{i,F})$, $i=1,2$,
with $S_1\cap S_2=\emptyset$ and $C_1\cap C_2=\emptyset$,
their product $A_1\bigotimes A_2$ is the TA
\[
A = (S_1\times S_2,  S_{1,0}\times S_{2,0},  R,  C_1\cup C_2, \delta,  M, S_{1,F}\times S_{2,F}),
\]
where $R=\{r_1\cap r_2\mid r_1\in R_1,\ r_2\in R_2\}$ and
$M((s_1,s_2))=M_1(s_1)\cap M_2(s_2)$.
The transition function is defined asynchronously as follows.
If $\delta_1(s_1,s_1')=(\theta_1,\gamma_1)$ and $s_2'=s_2$, then
\[
\delta((s_1,s_2),(s_1',s_2'))=(\theta,\gamma_1),
\]
where $\theta$ agrees with $\theta_1$ on $C_1$ and has no constraint on $C_2$.
Similarly, if $\delta_2(s_2,s_2')=(\theta_2,\gamma_2)$ and $s_1'=s_1$, then
\[
\delta((s_1,s_2),(s_1',s_2'))=(\theta,\gamma_2),
\]
where $\theta$ agrees with $\theta_2$ on $C_2$ and has no constraint on $C_1$.
\end{mydef}

The union construction naturally represents disjunction.
Since no transition is added between the two component automata, an accepting run of $A_1\bigsqcup A_2$ must stay entirely inside either $A_1$ or $A_2$.
Thus, a trajectory accepted by the union satisfies at least one of the two component specifications.

The product construction naturally represents conjunction.
A product state records the simultaneous progress of both component automata, and its region map is the intersection of the two component region maps.
Therefore, a trajectory induced by a product run satisfies the geometric constraints required by both automata.
The asynchronous transition rule allows either component automaton to advance whenever its own guard is satisfied, while the other component remains in its current state.
If two component transitions occur at the same physical time, they can be represented by two consecutive zero-duration transitions.
Consequently, the TA for a formula $\Phi$ in \eqref{eq:thirdclassstl} can be constructed recursively by
\[
A_{\Phi_1\vee\Phi_2}=A_{\Phi_1}\bigsqcup A_{\Phi_2},
\qquad
A_{\Phi_1\wedge\Phi_2}=A_{\Phi_1}\bigotimes A_{\Phi_2}.
\]

\subsection{Correctness and Complexity}
\label{subsection:TAproperty}

We now prove that the proposed construction is sound.
That is, every trajectory accepted by the constructed TA satisfies the corresponding STL formula.

\begin{mypro}
\label{prop:secondclassisright}
For any STL formula $\phi$ in \eqref{eq:secondclassstl}, let $A_\phi$ be the corresponding TA template constructed in Section~\ref{subsection:construction of TA}.
Then, for any trajectory $\xi:[0,T]\to\mathbb{R}^n$,
\begin{equation}
\label{eq:timaautomatoniscorrect}
    \xi \models A_\phi \implies \xi \models \phi.
\end{equation}
\end{mypro}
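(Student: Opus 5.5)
The proof proceeds by case analysis over the five templates in \eqref{eq:secondclassstl}. In every case we fix an accepting run $\rho$ of $A_\phi$ of the form \eqref{eq:timeautorun} that induces $\xi$, in the sense of Definition~\ref{def:runinducetra}.

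Two facts are used throughout. First, the clock $\kappa$ starts at $0$ (since $\sigma_0\equiv 0$) and is never reset. By conditions 3) and 4) of Definition~\ref{def:runofTA}, its value just before the $i$-th transition therefore equals the absolute switching time $t_i$. So every guard on $\kappa$ becomes a constraint on $t_i$. Second, \eqref{eq:inducedtrajectory} combined with $R_\varphi\subseteq[\varphi]$ gives $(\xi,\tau)\models\varphi$ for all $\tau\in I_i$ whenever $M(s_i)=R_\varphi$. Because the intervals $I_i$ are closed, this also holds at the switching instants.

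In each template the transition structure leaves little freedom, and the accepting set determines which states must have been visited.

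\emph{Eventually.} Acceptance at $s_1$ or $s_2$ forces the transition $s_0\to s_1$ at some $t_1\in[a,b]$. Then $\xi(t_1)\in R_\varphi$, so $(\xi,0)\models\mathbf{F}_{[a,b]}\varphi$.

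\emph{Always.} Acceptance at $s_2$ forces $t_1\le a$ and $t_2\ge b$, with $\xi\in R_\varphi$ on $[t_1,t_2]\supseteq[a,b]$.

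\emph{Until.} We have $\xi\in R_{\varphi_1}$ on $I_0=[0,t_1]$, with $t_1\in[a,b]$ and $\xi(t_1)\in R_{\varphi_1\wedge\varphi_2}$. Taking $t'=t_1$ in the semantics of $\mathbf{U}$ gives the claim, since $\varphi_1$ holds on all of $[0,t']$.

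\emph{Eventually-always.} The clock $\eta$ is reset at $t_1\in[a+c,b+c]$, and the guard on leaving $s_1$ gives $t_2-t_1\ge d-c$. Set $\tau=t_1-c\in[a,b]$. Then $[\tau+c,\tau+d]=[t_1,t_1+d-c]\subseteq[t_1,t_2]=I_1$, on which $\xi\in R_\varphi$.

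The main obstacle is the \emph{always-eventually} template. Here the run has the shape $s_0 s_1 s_2 (s_3 s_2)^* s_4$, with finitely many cycles. Let $P\subseteq[0,T]$ be the union of the intervals $I_i$ with $s_i=s_2$; $\varphi$ holds on all of $P$. I will establish three facts.
\begin{enumerate}[label=(\roman*)]
\item Let $u_1$ be the first entry time into $s_2$. Since $\eta$ is reset at $t_1\le a+c$ and the guard $\eta\le d-c$ must hold on entering $s_2$, we get $u_1\le t_1+d-c\le a+d$.
\item Each excursion through $s_3$ resets $\eta$ on leaving $s_2$ and must re-enter $s_2$ with $\eta\le d-c$. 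Hence every gap of $[u_1,\max P]\setminus P$ is an open interval of length at most $d-c$.
\item The final transition $s_2\to s_4$ requires $\kappa\ge b+c$, so $\max P\ge b+c$.
\end{enumerate}

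Now fix $\tau\in[a,b]$ and set $W=[\tau+c,\tau+d]$, a closed interval of length $d-c$. By (i), $u_1\le a+d\le\tau+d$. By (iii), $\max P\ge b+c\ge\tau+c$. So $W$ meets $[u_1,\max P]$. If $W$ contained no point of $P$, then $W$ would lie inside a single open gap of length at most $d-c$, or entirely before $u_1$, or entirely after $\max P$. Each of these is impossible: the first because of (ii) and the fact that $W$ is closed with length $d-c$, the others because of the endpoint bounds just derived. Hence some $t'\in W\cap P$ satisfies $\varphi$, which yields $(\xi,0)\models\mathbf{G}_{[a,b]}\mathbf{F}_{[c,d]}\varphi$.

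I expect the careful bookkeeping of $\eta$ along the $s_2/s_3$ alternation to be the only delicate point, namely translating the reset-and-guard conditions into the gap bound (ii). Zero-duration visits do not break the argument, because the intervals are closed.
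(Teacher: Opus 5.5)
Your proposal is correct and follows essentially the same route as the paper: a case analysis over the five templates. In the always-eventually case you use the same three ingredients: the first entry into $s_2$ occurs no later than $a+d$, gaps between visits to $s_2$ are at most $d-c$, and the final exit happens no earlier than $b+c$, followed by the same contradiction on the window $[\tau+c,\tau+d]$. Your version is somewhat more careful than the paper's, since it explicitly compares the closed window of length $d-c$ with open gaps of length at most $d-c$ and rules out the window lying before $u_1$ or after $\max P$.
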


\begin{proof}
We prove the result by considering the five templates in \eqref{eq:secondclassstl}.

\textbf{1) Eventually:}
Let $\phi=\mathbf{F}_{[a,b]}\varphi$ and suppose that $\xi\models A_\phi$.
Then there exists an accepting run induced by $\xi$ that enters state $s_1$ at some time $t_1$ satisfying $a\leq t_1\leq b$.
Since $M(s_1)=R_\varphi\subseteq[\varphi]$, we have $(\xi,t_1)\models\varphi$.
Therefore, $\xi\models\mathbf{F}_{[a,b]}\varphi$.

\textbf{2) Always:}
Let $\phi=\mathbf{G}_{[a,b]}\varphi$ and suppose that $\xi\models A_\phi$.
By the guards of the template, the accepting run enters state $s_1$ at some time $t_1\leq a$ and leaves it at some time $t_2\geq b$.
Since $M(s_1)=R_\varphi\subseteq[\varphi]$, the trajectory satisfies $\varphi$ for all $t\in[t_1,t_2]$.
In particular, it satisfies $\varphi$ for all $t\in[a,b]$.
Hence $\xi\models\mathbf{G}_{[a,b]}\varphi$.

\textbf{3) Until:}
Let $\phi=\varphi_1\mathbf{U}_{[a,b]}\varphi_2$ and suppose that $\xi\models A_\phi$.
Then the accepting run stays in $s_0$ until some transition time $t_1$ satisfying $a\leq t_1\leq b$.
During this interval, the trajectory lies in
$R_{\varphi_1}\subseteq[\varphi_1]$.
At time $t_1$, the run enters $s_1$, whose region satisfies
$R_{\varphi_1\wedge\varphi_2}\subseteq[\varphi_1\wedge\varphi_2]$.
Therefore, $(\xi,t)\models\varphi_1$ for all $t\in[0,t_1]$, and $(\xi,t_1)\models\varphi_2$.
Thus, $\xi\models\varphi_1\mathbf{U}_{[a,b]}\varphi_2$.

\textbf{4) Eventually-always:}
Let $\phi=\mathbf{F}_{[a,b]}\mathbf{G}_{[c,d]}\varphi$ and suppose that $\xi\models A_\phi$.
By the first transition guard, the accepting run enters state $s_1$ at some time $t_1$ satisfying
\[
a+c\leq t_1\leq b+c.
\]
Let $\tau=t_1-c$.
Then $\tau\in[a,b]$.
Since the clock $\eta$ is reset when the run enters $s_1$ and the transition from $s_1$ to $s_2$ requires $\eta\geq d-c$, the run remains in $s_1$ for at least $d-c$ time units.
Hence the trajectory lies in $R_\varphi\subseteq[\varphi]$ for all times in
\[
[t_1,t_1+d-c]=[\tau+c,\tau+d].
\]
Therefore, $(\xi,\tau)\models\mathbf{G}_{[c,d]}\varphi$ for some $\tau\in[a,b]$, and consequently
\[
\xi\models\mathbf{F}_{[a,b]}\mathbf{G}_{[c,d]}\varphi.
\]

\textbf{5) Always-eventually:}
Let $\phi=\mathbf{G}_{[a,b]}\mathbf{F}_{[c,d]}\varphi$ and suppose that $\xi\models A_\phi$.
The run reaches the $\varphi$-enforcing state $s_2$ for the first time no later than $a+d$.
Indeed, the transition from $s_0$ to $s_1$ occurs no later than $a+c$, and the transition from $s_1$ to $s_2$ occurs within at most $d-c$ additional time units.
Afterwards, whenever the run leaves $s_2$, the clock $\eta$ is reset, and the transition from $s_3$ back to $s_2$ must occur within at most $d-c$ time units.
Thus, over the relevant interval, the time gap between consecutive visits to $s_2$ is never larger than $d-c$.
Moreover, the run can leave $s_2$ for the accepting state $s_4$ only after time $b+c$.

We now show that every window $[\tau+c,\tau+d]$ with $\tau\in[a,b]$ contains a time instant at which $\varphi$ holds.
Assume, for contradiction, that there exists $\tau\in[a,b]$ such that
\[
(\xi,t)\not\models\varphi,\quad \forall t\in[\tau+c,\tau+d].
\]
Then the interval $[\tau+c,\tau+d]$ contains no time at which the run is in state $s_2$, because $M(s_2)=R_\varphi\subseteq[\varphi]$.
However, the length of this interval is $d-c$, while the construction above ensures that every time interval of length $d-c$ between the first possible visit to $s_2$ and the terminal guard contains a visit to $s_2$.
This is a contradiction.
Therefore, for every $\tau\in[a,b]$, there exists $t\in[\tau+c,\tau+d]$ such that $(\xi,t)\models\varphi$.
Hence
\[
\xi\models\mathbf{G}_{[a,b]}\mathbf{F}_{[c,d]}\varphi.
\]
This proves the claim for all templates.
\end{proof}

Proposition~\ref{prop:secondclassisright} shows that each elementary template is sound.
If the conservative inclusions such as $R_\varphi\subseteq[\varphi]$ are replaced by equalities, then the templates for
$\mathbf{F}_{[a,b]}\varphi$,
$\mathbf{G}_{[a,b]}\varphi$,
$\varphi_1\mathbf{U}_{[a,b]}\varphi_2$, and
$\mathbf{F}_{[a,b]}\mathbf{G}_{[c,d]}\varphi$
are also complete with respect to their corresponding STL formulae.
That is, every trajectory satisfying the STL formula is accepted by the associated TA.
For the template of $\mathbf{G}_{[a,b]}\mathbf{F}_{[c,d]}\varphi$, this converse implication does not necessarily hold.
The reason is that the automaton explicitly alternates between a $\varphi$-enforcing state and a non-enforcing state.
If the regions are chosen exactly as $R_\varphi=[\varphi]$ and $R_{\neg\varphi}=[\neg\varphi]$, then a direct transition between these two states may require the trajectory to pass through
$[\varphi]\cap[\neg\varphi]$, which is empty.
Thus, this template is intended as a sound sufficient construction.

We next show that the union and product constructions correctly encode disjunction and conjunction.

\begin{mypro}
\label{prop:productandunion}
Let $A_i=(S_i,S_{i,0},R_i,C_i,\delta_i,M_i,S_{i,F}),i=1,2$ 
be two TAs with disjoint state sets and disjoint clock sets.
Let $\Phi_1$ and $\Phi_2$ be two STL formulae.
Suppose that, for $i=1,2$,
\[
\xi\models A_i \implies \xi\models\Phi_i.
\]
Then
\begin{enumerate}[leftmargin=*,topsep=0pt,itemsep=2pt]
    \item $\xi\models A_1\bigsqcup A_2
    \implies
    \xi\models\Phi_1\vee\Phi_2$;
    \item $\xi\models A_1\bigotimes A_2
    \implies
    \xi\models\Phi_1\wedge\Phi_2$.
\end{enumerate}
Moreover, if the converse implication also holds for each $A_i$, then the converse implication also holds for the union and product constructions.
\end{mypro}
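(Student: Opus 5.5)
The plan is to argue directly from Definitions~\ref{def:runofTA} and~\ref{def:runinducetra}, by projecting runs of the composite automaton onto the component automata and, for the converse directions, lifting component runs back to composite runs.

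\textbf{Union.} Suppose $\xi\models A_1\bigsqcup A_2$, witnessed by an accepting run $\rho$ that induces $\xi$. Its initial state lies in $S_{1,0}\cup S_{2,0}$, say in $S_{1,0}$. Since $\delta$ contains no transitions between $S_1$ and $S_2$, induction along $\rho$ shows that every state of $\rho$ lies in $S_1$. Each transition used is therefore a transition of $\delta_1$, with the same guard and reset. The guards of $\delta_1$ mention only $C_1$. Hence deleting the $C_2$-components of the clock valuations gives a sequence satisfying conditions 1)--4) for $A_1$; the initial valuation is still zero and the intervals $I_i$ are unchanged. Because $M=M_1$ on $S_1$, the induced-trajectory condition \eqref{eq:inducedtrajectory} carries over. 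The final state lies in $S_F\cap S_1=S_{1,F}$, so $\xi\models A_1$, hence $\xi\models\Phi_1$, and therefore $\xi\models\Phi_1\vee\Phi_2$. For the converse, if $\xi\models\Phi_1\vee\Phi_2$, say $\xi\models\Phi_1$, then $\xi\models A_1$. We extend the clock valuations of that run by letting the $C_2$ clocks start at $0$ and evolve without resets; they are unconstrained, so the result is an accepting run of the union.

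\textbf{Product.} Suppose $\xi\models A_1\bigotimes A_2$, witnessed by a run $\rho$ with states $(s^1_i,s^2_i)$ on intervals $I_i$. We project onto $A_1$. First take the state sequence $s^1_0,\dots,s^1_n$ and merge maximal consecutive blocks with equal $s^1$, taking the union of their intervals; this is the same reduction used in the proof of Theorem~\ref{thm:soundness}. Each step where $s^1$ changes is a $\delta_1$ transition with guard $\theta_1$ and reset $\gamma_1$. Each step where only $s^2$ changes resets only clocks in $C_2$. We then check the following.
\begin{itemize}
\item The $C_1$-clocks evolve purely by elapsed time across merged blocks.
\item The $C_1$-part of the guard at a change of $s^1$ equals $\theta_1$.
\item Condition 4) holds after the $C_1$ reset.
\end{itemize}
These checks amount to the same telescoping as \eqref{eq:soundnessinner}. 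The initial state lies in $S_{1,0}$ and the final state in $S_{1,F}$. Since $M((s^1,s^2))\subseteq M_1(s^1)$, the trajectory $\xi$ is induced by the projected run, so $\xi\models A_1$. The symmetric argument gives $\xi\models A_2$, and hence $\xi\models\Phi_1\wedge\Phi_2$.

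\textbf{Converse for the product.} Given accepting runs $\rho_1,\rho_2$ of $A_1,A_2$ inducing $\xi$, take the union of their switching times and order them, breaking ties arbitrarily. This produces an interleaved sequence of product states, and simultaneous switches become zero-duration transitions. Guards and resets are inherited from the respective components, and the clocks of the component that does not move are unconstrained. The region condition holds because $\xi(\tau)\in M_1(s^1)\cap M_2(s^2)$ on every common subinterval.

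The main obstacle is the bookkeeping in the product projection, namely showing that the clock valuations of the merged run satisfy conditions 3)--4) exactly. A further subtlety arises with zero-duration intervals at merged switching times, where the closed intervals $I_i$ overlap at endpoints. I will handle these by noting that $\xi$ is induced on every closed piece, so endpoint membership in both regions is already guaranteed.
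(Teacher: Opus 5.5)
Your proposal is correct and follows essentially the same route as the paper's proof. The paper, like you, confines a union run to one component. It projects a product run onto each factor, dropping the other factor's transitions, which reset no clocks of this factor, and uses $M((s^1,s^2))\subseteq M_1(s^1)$. For the product converse, it interleaves the two component runs by ordering their switching times. You are slightly more explicit than the paper about restricting and extending clock valuations between $C_i$ and $C_1\cup C_2$ in the union case. One small point: in the product converse, break ties only between the two components, and keep each component's own transition order.
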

The proof of Propositions~\ref{prop:productandunion} is provided in Appendix.
Combining Propositions~\ref{prop:secondclassisright} and~\ref{prop:productandunion}, we obtain a sound TA construction for every formula $\Phi$ in the fragment \eqref{eq:consideredstl}.
Specifically, we first build a constant-size template for each elementary temporal subformula $\phi$ in \eqref{eq:secondclassstl}.
We then recursively apply union and product operators according to the Boolean structure of $\Phi$.
The resulting TA $A_\Phi$ satisfies
\[
\xi\models A_\Phi \implies \xi\models\Phi.
\]

\begin{remark}[Complexity of the Proposed Construction]
\label{remark:complexityofTAconstruction}
We briefly analyze the complexity of the proposed TA construction.
For an STL formula $\Phi$ in \eqref{eq:thirdclassstl}, let $|\Phi|$ denote the number of elementary temporal subformulae $\phi$ of the form in \eqref{eq:secondclassstl} contained in $\Phi$.
Each elementary template introduced above has at most five states and uses at most two clock variables.
Therefore, before any optional clock-sharing simplification, the number of clocks in the composed TA satisfies $|C|\leq 2|\Phi|$, 
and hence grows linearly with $|\Phi|$.

The number of states depends on how these elementary templates are combined.
The union operator, used for disjunction, adds the state spaces of the component automata.
In contrast, the product operator, used for conjunction, multiplies the state spaces of the component automata.
Therefore, the state space may grow exponentially with the number of elementary temporal subformulae.
In the worst case, if $N=|\Phi|$ elementary templates are combined through conjunctions, the resulting product automaton has at most $|S|\leq 5^N$
states.
Thus, the proposed construction has linear clock growth and, in the worst case, exponential state growth with respect to the number of elementary temporal subformulae.
This exponential dependence is mainly caused by Boolean conjunctions, rather than by the temporal templates themselves.
\end{remark}

\begin{remark}[Comparison with Standard STL-to-TA Constructions]
\label{remark:comparisonstandardTA}
We now compare the proposed template-based construction with standard STL-to-TA translations.
For a general STL formula $\Phi$ in \eqref{eq:generalSTL}, let $N$ denote the total number of propositions, Boolean operators, and temporal operators in $\Phi$, and let $K$ denote the largest integer constant appearing in $\Phi$.
Classical constructions, such as~\cite{alur1996benefits}, may yield timed automata whose number of states is exponential in $NK$, while the number of clock variables is linear in $NK$.
This is because such constructions are designed for arbitrary STL formulae and must support recursive composition with temporal operators.
Although our construction still has worst-case exponential state growth due to Boolean products, it has two practical advantages for the fragment \eqref{eq:consideredstl}.
First, each elementary temporal pattern in \eqref{eq:secondclassstl} is represented by a constant-size template.
Therefore, the construction avoids recursively composing automata with temporal operators.
Second, the size of each template does not depend on the numerical lengths of the temporal intervals.
Thus, long time horizons or large interval bounds do not by themselves increase the size of the elementary automata.

This distinction is important for robotic motion-planning tasks.
For example, consider
\[
\Phi=
\mathbf{G}_{[a,b]}\mathbf{F}_{[c,d]}
\left(
\bigwedge_{i=1}^{n}
\bigvee_{j=1}^{m}
\varphi_{i,j}
\right).
\]
Under the proposed fragment-level counting, the outer temporal structure corresponds to a single elementary temporal subformula, so $|\Phi|=1$.
The Boolean formula inside the predicate only affects the geometric satisfaction region used in the corresponding template.
By contrast, a general syntactic count includes all propositions and Boolean operators inside the predicate, which can be much larger.
Therefore, the proposed construction is particularly suitable for long-horizon robotic tasks whose temporal structure is simple but whose geometric predicates may be rich.
\end{remark}

\begin{remark}[Implementation Simplifications]
Several implementation-level simplifications can further reduce the size of the constructed automaton.
First, the physical-time clock $\kappa$ used in each elementary template records the same global time and is never reset.
Thus, instead of assigning a separate physical-time clock to each template, all templates can share a single global clock.
With this simplification, the final automaton only needs one global clock, together with one auxiliary clock for each nested temporal template.
Second, unreachable or inconsistent product states can be pruned.
For example, if the clock constraints associated with two component states are mutually incompatible, then the corresponding product state or transition cannot appear in any feasible run and can be safely removed.
Such pruning can substantially reduce the size of the product automaton before constructing the GCS.
Third, when the full product automaton is still too large, one may select a particular accepting path, or a small set of promising accepting paths, from the TA and construct the GCS only over the selected paths.
This heuristic reduces the size of the resulting optimization problem but sacrifices completeness.
Nevertheless, soundness is preserved: whenever the resulting GCS problem is feasible, the reconstructed trajectory is still guaranteed to satisfy the STL specification.
\end{remark}
 
\section{Experiment Results}
\label{sec:simulation}

This section evaluates the proposed STL motion-planning framework in both simulation and hardware experiments.
The experiments are designed to examine three aspects of the proposed approach: its performance on standard low-dimensional STL planning benchmarks, its scalability to high-dimensional robotic systems, and its practicality on a real robot platform.
Section~\ref{subsec:compareexp} first reports simulation experiments on $2$-D and $3$-D STL planning problems, including comparisons with existing STL motion-planning methods.
Section~\ref{subsection:highdimexp} then evaluates the proposed method on a $30$-DoF humanoid robot in simulation.
Finally, Section~\ref{subsection:hardwardexp} presents a hardware experiment on a UR-3 robot arm.

All experiments are performed on a Linux workstation.
We use \textsf{Mosek}~\cite{mosek} to solve the convex relaxation of the GCS optimization problem.
The reported runtime of our method is divided into three parts: the time for constructing the TA from the STL specification, the time for forming the GCS, and the time for solving the resulting convex optimization problem.
Among these steps, only the final optimization step needs to be performed online.
The TA construction and GCS construction can be performed offline and reused for different initial conditions whenever the task structure and workspace decomposition remain unchanged.

The runtime results are summarized in Table~\ref{tab:runningtime}.
For the $2$-D benchmarks, the table also reports the solve times of the comparison methods.
In addition, for the $3$-D quadrotor example, we compare our method with the PWL method in~\cite{sun2022multi}.
A timeout, denoted by ``TO'', means that the solve time exceeds $7200$ seconds.

\begin{table*}[t]
  \centering
  \caption{Runtime comparison and implementation statistics. ``TO'' denotes timeout after $7200$ seconds, and ``--'' means that the corresponding method was not evaluated for that example. PWL and MPC correspond to the methods in~\cite{sun2022multi} and~\cite{kurtz2022mixed}, respectively.}
  \label{tab:runningtime}
  \begin{threeparttable}[t]
     \begin{tabular}{lcccccc}
      \toprule
      Example & Setting & STL-to-TA (s) & Form GCS (s) & Ours Solve (s) & PWL Solve (s) & MPC Solve (s) \\
      \midrule
      \texttt{stlcg}     & $2$-D   & $0.0003$ & $0.03$ & $0.39$ & $0.3$   & $0.2$ \\
      \texttt{puzzle-1}  & $2$-D   & $0.037$  & $1.23$ & $4.5$  & $18.2$  & $3017$ \\
      \texttt{puzzle-2}  & $2$-D   & $0.23$   & $4.35$ & $33.1$ & $90.9$  & TO \\
      \texttt{rover}     & $2$-D   & $0.001$  & $0.35$ & $3.86$ & $12.1$  & $866.4$ \\
      \texttt{either-or} & $2$-D   & $0.002$  & $1.43$ & $1.86$ & $0.3$   & TO \\
      \texttt{deliver}   & $2$-D   & $0.004$  & $0.33$ & $2.76$ & $8.2$   & TO \\
      \texttt{quadrotor} & $3$-D   & $0.004$  & $1.17$ & $15.68$ & $489$  & -- \\
      \texttt{humanoid}  & $30$-DoF & $0.001$ & $0.38$ & $10.2$ & -- & -- \\
      \texttt{manipulator} & $6$-DoF & $0.002$ & $0.46$ & $9.65$ & -- & -- \\
      \bottomrule
    \end{tabular}
  \end{threeparttable}
\end{table*}

\begin{figure*}[t]
    \centering
    \def\twoDCaseHeight{0.192\textheight}
    \newcommand{\twoDLegendLine}[1]{%
        \tikz[line cap=round]{\draw[#1,line width=0.9pt] (0,0)--(0,0.56cm);}%
    }
    \newcommand{\twoDLegendName}[1]{\rotatebox{90}{\scriptsize #1}}
    \begin{tabular}{@{}c@{\hspace{0.006\textwidth}}c@{}}
        \begin{minipage}[c]{0.052\textwidth}
            \centering
            \twoDLegendLine{brown!85!black}\par\vspace{0.05cm}
            \twoDLegendName{GCS}\par\vspace{0.34cm}
            \twoDLegendLine{purple,dashed}\par\vspace{0.05cm}
            \twoDLegendName{PWL}\par\vspace{0.34cm}
            \twoDLegendLine{green!55!black,dash pattern=on 4pt off 1.2pt on 0.9pt off 1.2pt}\par\vspace{0.05cm}
            \twoDLegendName{stlpy\_MICP}
        \end{minipage}
        &
        \begin{minipage}[c]{0.928\textwidth}
            \centering
            \begin{tabular}{@{}c@{\hspace{0.001\linewidth}}c@{\hspace{0.004\linewidth}}c@{}}
            \subfigure[stlcg]{\includegraphics[height=\twoDCaseHeight]{ 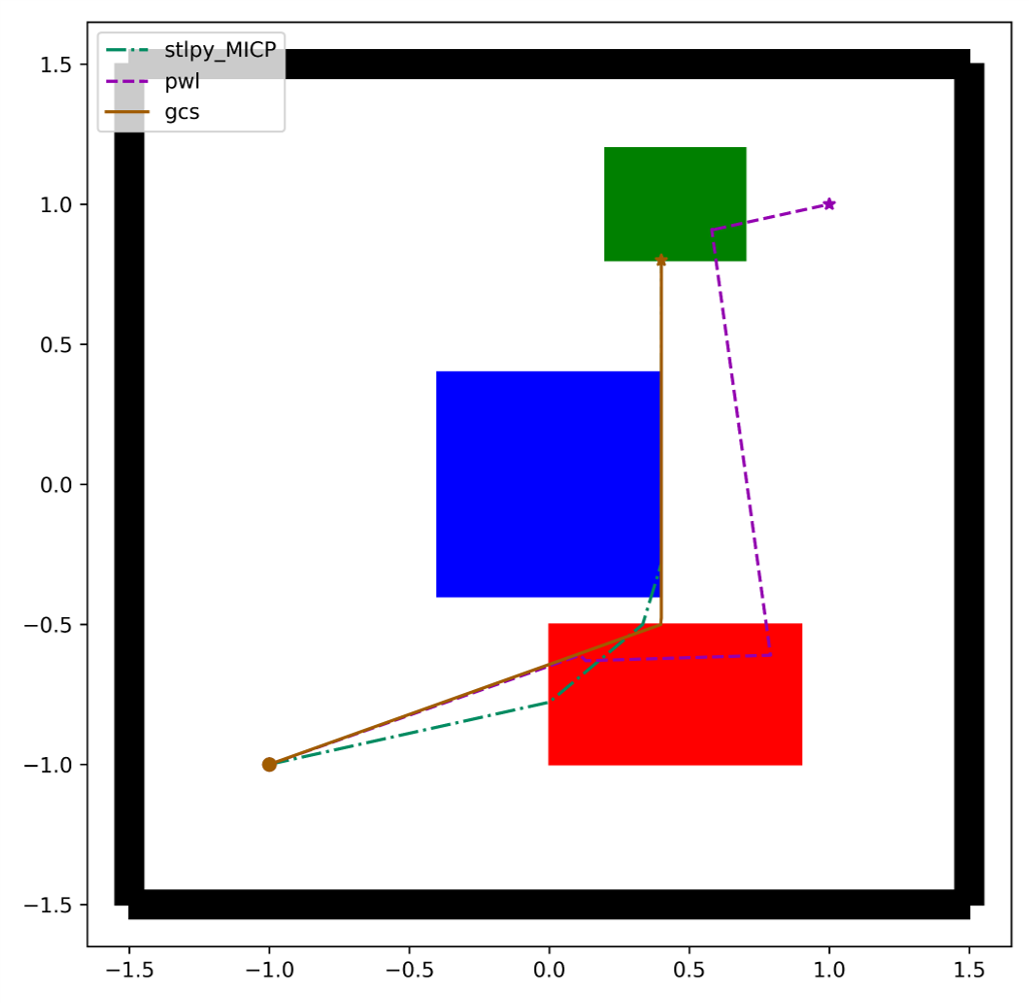}} &
            \subfigure[puzzle-1]{\includegraphics[height=\twoDCaseHeight]{ 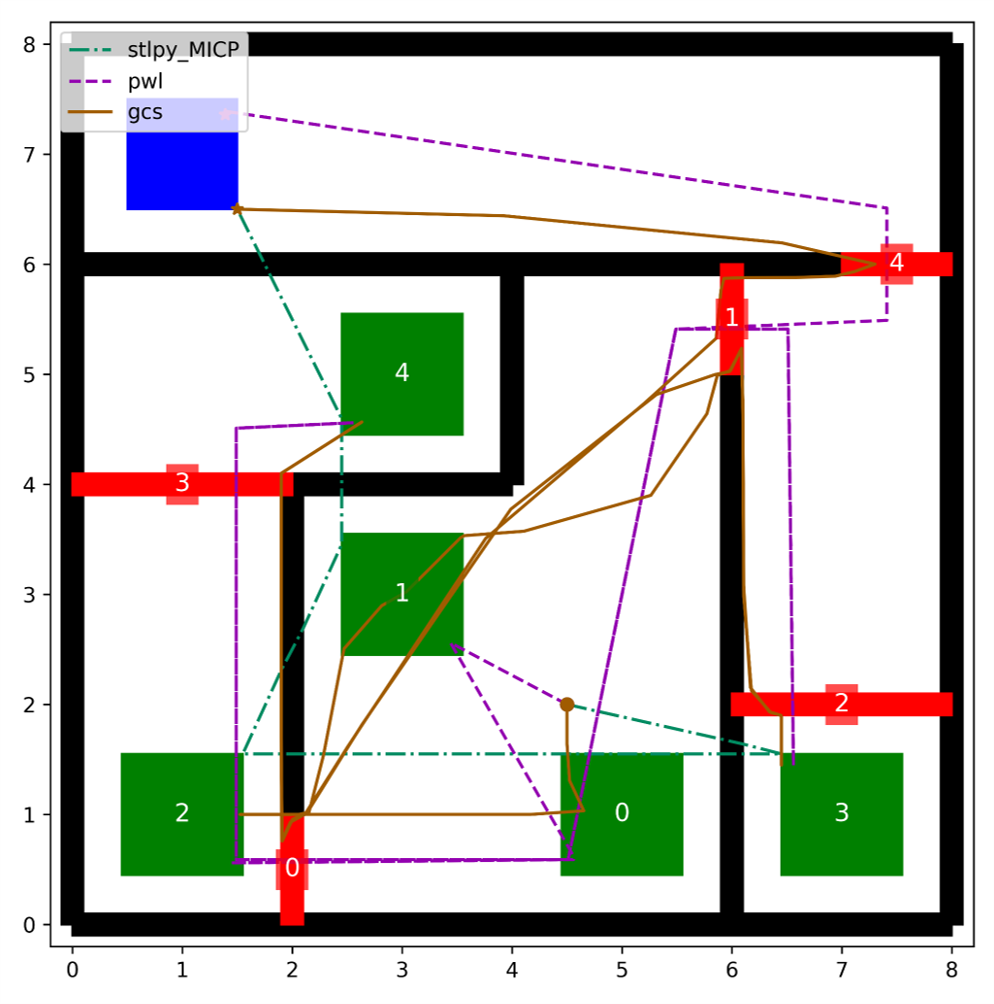}} &
            \subfigure[puzzle-2]{\includegraphics[height=\twoDCaseHeight]{ 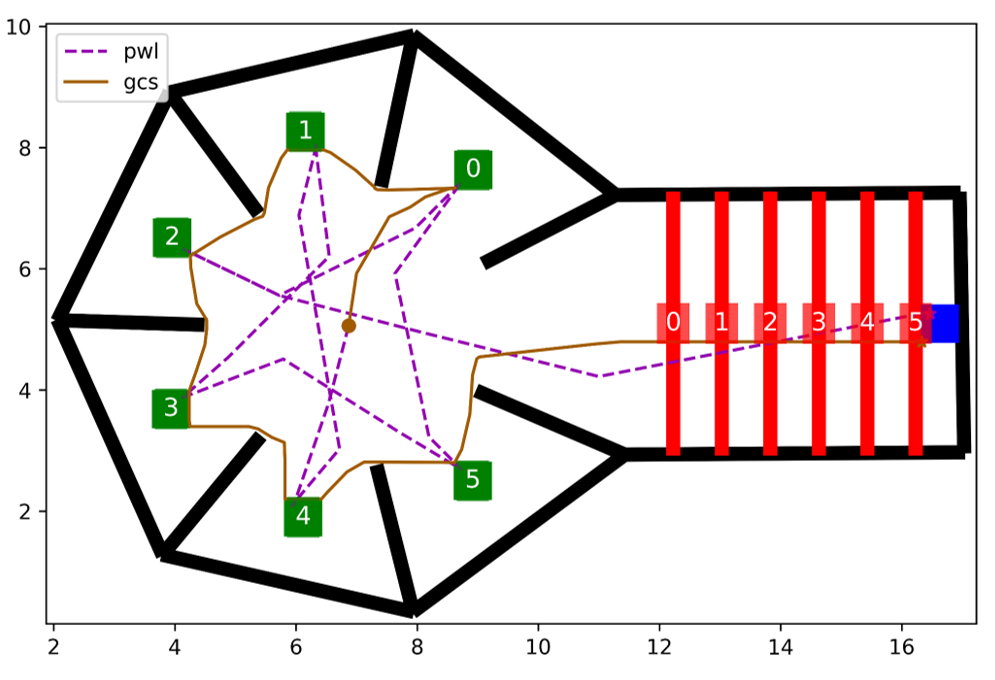}} \\
            \subfigure[rover]{\includegraphics[height=\twoDCaseHeight]{ 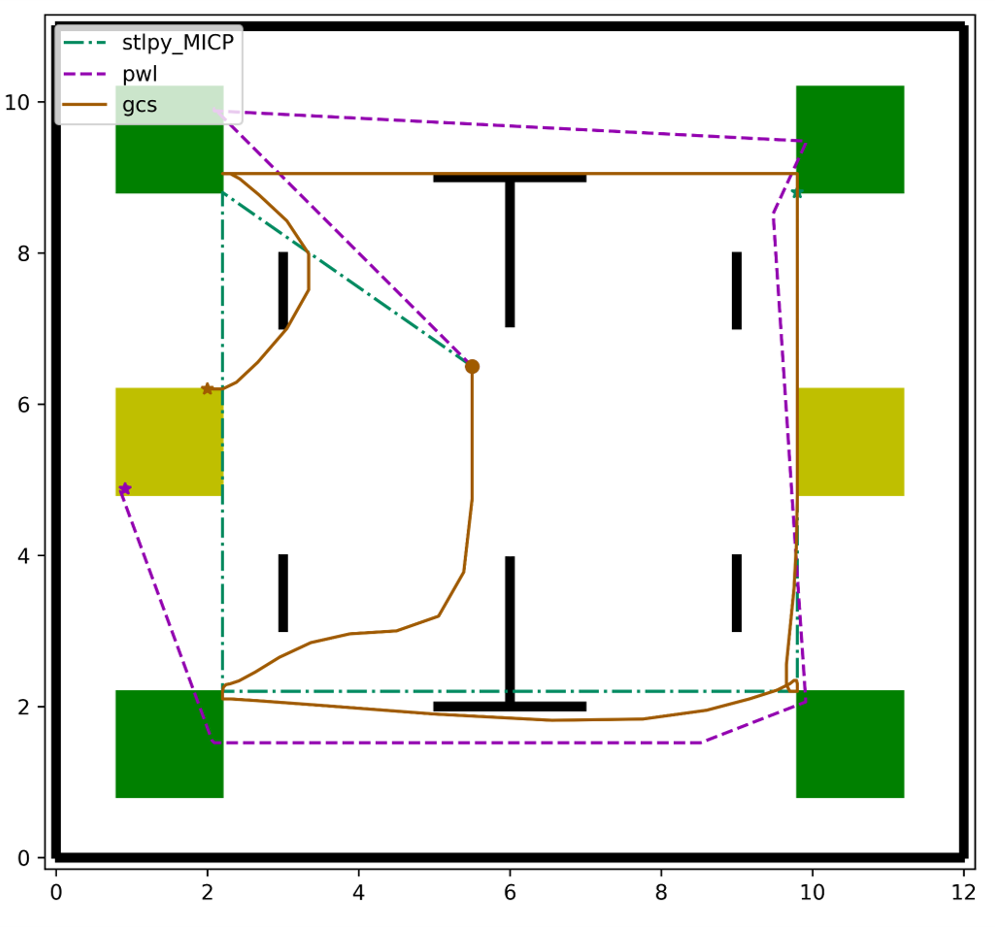}} &
            \subfigure[either-or]{\includegraphics[height=\twoDCaseHeight]{ 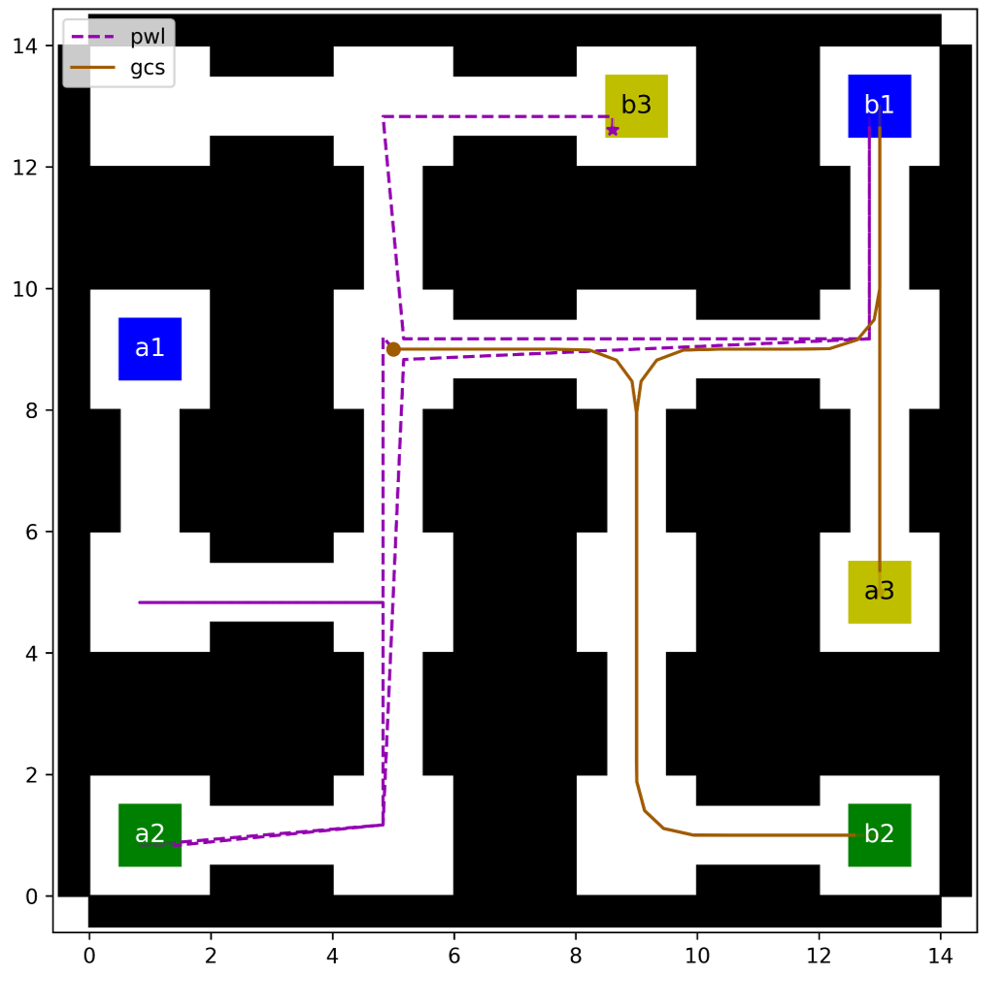}} &
            \subfigure[deliver]{\includegraphics[height=\twoDCaseHeight]{ 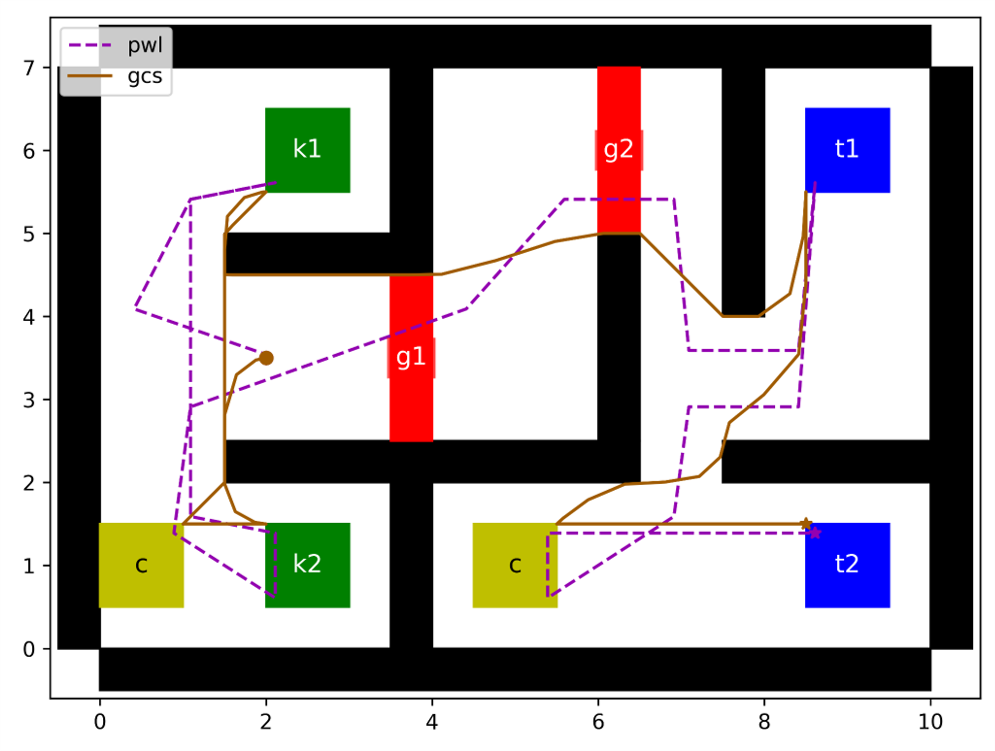}}
            \end{tabular}
        \end{minipage}
    \end{tabular}
    \caption{Benchmarks and computed trajectories of each method for a $2$-D robotic system. GCS denotes the trajectories generated by our method. PWL and stlpy\_MICP denote the trajectories generated by the methods in \cite{sun2022multi} and \cite{kurtz2022mixed}, respectively.}
    \label{fig:2dexample}
\end{figure*}

\subsection{Simulation Experiments on 2-D/3-D STL Planning}
\label{subsec:compareexp}

\textbf{Benchmarks:}
We first evaluate the proposed method on six $2$-D STL motion-planning benchmarks, as shown in Figure~\ref{fig:2dexample}.
The first four benchmarks are adopted from the STL motion-planning literature~\cite{sun2022multi}, while the last two are newly designed to test disjunctive tasks and more complex combinations of temporal requirements.
In all benchmarks, the black region, denoted by $W$, represents a wall that the robot must avoid at all times.
For readability, when presenting the STL specifications below, we omit the common safety constraint $\mathbf{G}_{[0,T]}\neg W$.

In \texttt{stlcg}, shown in Figure~\ref{fig:2dexample}(a), the robot is required to stay in the red region $R$ and the green region $G$ for $5$ seconds each, while always avoiding the blue region $B$.
The STL task can be described as follows
\[
\Phi_1=
\mathbf{F}_{[0,15]}\mathbf{G}_{[0,5]}R
\ \wedge \
\mathbf{F}_{[0,15]}\mathbf{G}_{[0,5]}G
\ \wedge \
\mathbf{G}_{[0,20]}\neg B.
\]

In \texttt{puzzle-1}, shown in Figure~\ref{fig:2dexample}(b), the robot must reach the goal region.
The environment contains several doors, shown in red.
Each door can be crossed only after the robot visits the corresponding numbered green region to collect the key.
Let $G$ denote the goal, $D_1,\dots,D_5$ the doors, and $K_1,\dots,K_5$ the keys.
The STL task is given by
\[
\Phi_2=
\mathbf{F}_{[0,10]}G
\wedge
\left(\bigwedge_{i=1}^5 \neg D_i \mathbf{U}_{[0,10]}K_i  \right).
\]

The benchmark \texttt{puzzle-2}, shown in Figure~\ref{fig:2dexample}(c), is similar to \texttt{puzzle-1} but contains six doors.
The corresponding STL task is given by
\[
\Phi_3=
\mathbf{F}_{[0,30]}G
\wedge
\left(\bigwedge_{i=1}^6 \neg D_i \mathbf{U}_{[0,30]}K_i  \right).
\]

The benchmark \texttt{rover}, shown in Figure~\ref{fig:2dexample}(d), is used to demonstrate that our framework can also handle STL tasks outside the fragment in \eqref{eq:consideredstl} by introducing additional TA templates.
In this scenario, the robot must visit each green goal region $G_i$.
Moreover, after arriving at any goal region, it must reach the yellow region $S$ within the following time interval.
The task is
\[
\Phi_4 =
\mathbf{G}_{[0,30]}
\left(
\bigvee_{i=1}^4G_i
\implies
\mathbf{F}_{[0,10]} S
\right)
\wedge
\left(
\bigwedge_{i=1}^4 \mathbf{F}_{[0,30]}G_i
\right).
\]
Since $\Phi_4$ is not directly contained in the STL fragment \eqref{eq:consideredstl}, we construct a new TA template for this task, as discussed in the appendix.

\begin{figure*}[t]
	\subfigure[3-D Scenario.] 
	{\label{fig:3d-scenario}
 \begin{minipage}[b]{0.22\linewidth}
	\centering
 \raisebox{0.51cm}{\includegraphics[height=4.16cm]{ 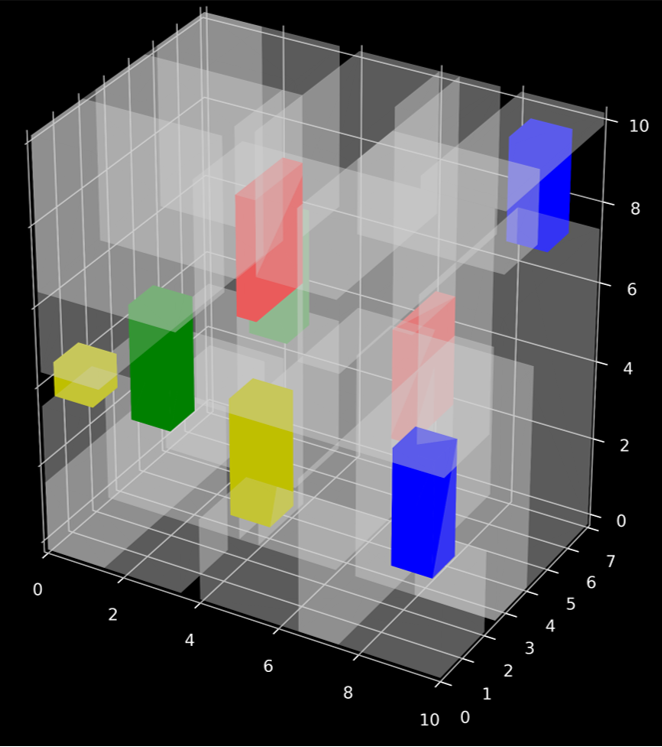}}
    \end{minipage}
		}\subfigure[Projection of the trajectories computed by our approach and the PWL method in~\cite{sun2022multi}] 
		{\label{fig:3dtrajectory}
 \begin{minipage}[b]{0.75\linewidth}
		\centering
 \includegraphics[height=5.3cm]{ 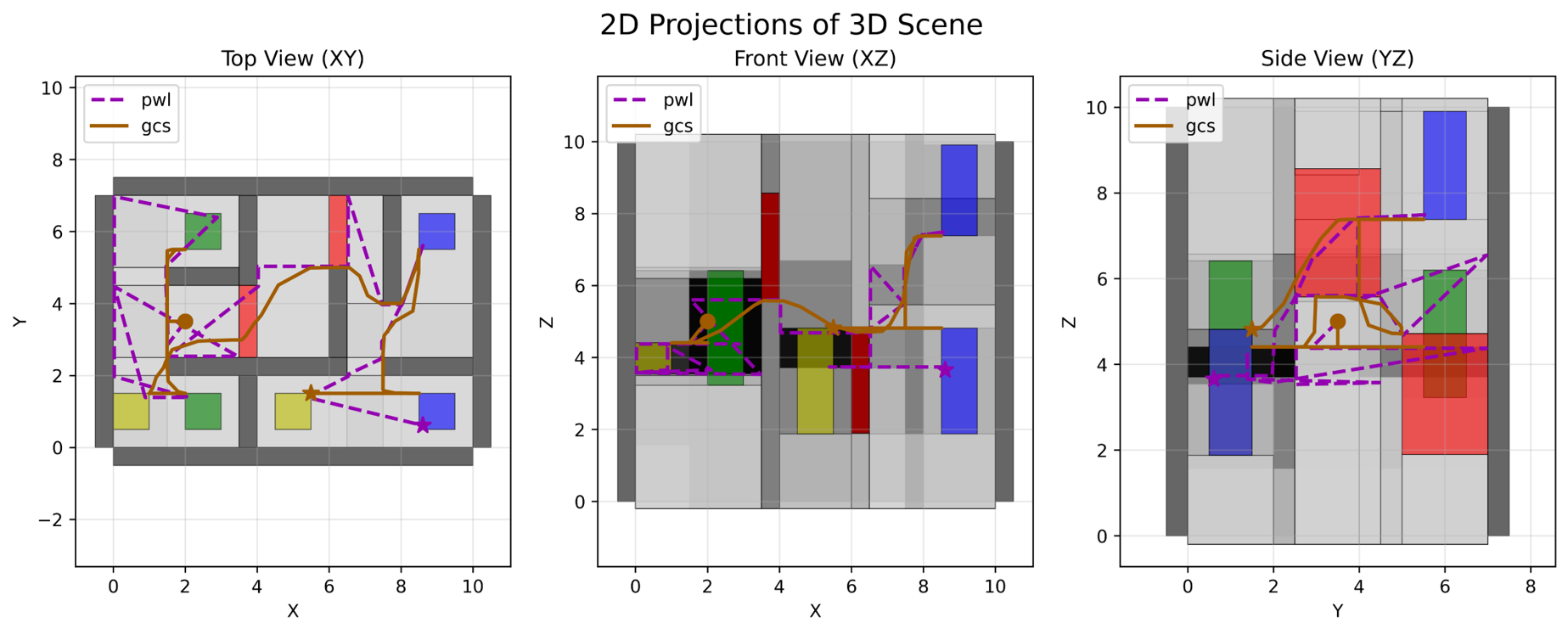}
    \end{minipage}
		}
    \caption{Results for the $3$-D quadrotor experiment.}
   \label{fig:3Dresult}
\end{figure*}

The benchmark \texttt{either-or}, shown in Figure~\ref{fig:2dexample}(e), considers an STL task with disjunctions.
There are three target pairs $(a_i,b_i)$, $i=1,2,3$.
For each pair, the robot must reach at least one target within the specified time interval:
\[
\Phi_5=
\bigwedge_{i=1}^2
\left(
\mathbf{F}_{[0,20]}a_i
\vee
\mathbf{F}_{[0,20]}b_i
\right)
\wedge
\left(
\mathbf{F}_{[20,30]}a_3
\vee
\mathbf{F}_{[20,30]}b_3
\right).
\]

Finally, the benchmark \texttt{deliver}, shown in Figure~\ref{fig:2dexample}(f), requires the robot to recharge in the yellow region $c$ and reach targets $t_1$ and $t_2$ within their corresponding time intervals.
In addition, the robot must collect keys $k_1$ and $k_2$ to open doors $g_1$ and $g_2$, respectively.
The STL task is
\begin{equation}
\label{eq:stlfordeliver}
\begin{aligned}
\Phi_6
=
&\ \mathbf{G}_{[0,20]}\mathbf{F}_{[0,10]}c
\wedge
\left(
\bigwedge_{i=1}^2
\neg g_i\mathbf{U}_{[2,8]}k_i
\right)  \\
&\wedge
\mathbf{F}_{[10,20]} t_1
\wedge
\mathbf{F}_{[20,30]} t_2.
\end{aligned}
\end{equation}

\textbf{Comparison methods:}
We compare our approach with two representative trajectory-optimization-based methods for STL motion planning.
The first baseline is the piecewise-linear (PWL) method in~\cite{sun2022multi}, which computes a piecewise-linear path satisfying the STL task and assumes the availability of a tracking controller for execution.
The second baseline is the MPC-based method in~\cite{belta2019formal,kurtz2022mixed}, which considers STL specifications under discrete-time semantics and solves an MPC problem with STL constraints to compute control inputs.
Among these two baselines, only the MPC method explicitly incorporates dynamic-feasibility constraints.

In contrast, for differentially flat systems, our method can enforce dynamic feasibility by imposing smoothness constraints on the generated B\'ezier trajectory.
Although both comparison methods also lead to MICPs, their convex relaxations are generally weak because they rely on big-$M$ encodings of logical constraints.
Moreover, these MICPs do not admit an inexpensive rounding procedure comparable to the GCS relaxation-and-rounding strategy.
In the experiments, 
the MPC method is implemented based on \textsf{stlpy}~\cite{kurtz2022mixed}.

\textbf{Results on 2-D benchmarks:}
The computed trajectories are shown in Figure~\ref{fig:2dexample}, and the runtime results are reported in Table~\ref{tab:runningtime}.
For the MPC baseline, we use a single-integrator model.
Under the same setting, our method enforces dynamic feasibility by requiring the generated trajectory to be once differentiable.

The MPC method is the fastest on \texttt{stlcg}, but it performs poorly on the more complex benchmarks.
It fails to find a solution within $7200$ seconds for \texttt{puzzle-2}, \texttt{either-or}, and \texttt{deliver}.
On \texttt{puzzle-1} and \texttt{rover}, it returns a trajectory only after substantially longer computation times than both our method and the PWL method.
Moreover, because the MPC baseline uses discrete-time STL semantics, the resulting trajectory may violate the continuous-time specification between sampling instants, for example by passing through wall regions.

Compared with the PWL method, our approach successfully solves all benchmarks with comparable or better runtime in most cases.
In particular, on \texttt{puzzle-1}, \texttt{puzzle-2}, \texttt{rover}, and \texttt{deliver}, our method is substantially faster than the PWL method.
The PWL method is faster on \texttt{stlcg} and \texttt{either-or}.
In terms of trajectory quality, our method produces smoother continuous-time trajectories and performs better on \texttt{puzzle-2}, \texttt{either-or}, and \texttt{deliver}, where the PWL method produces noticeably longer paths.
Overall, these results show that the proposed GCS formulation is competitive on low-dimensional STL planning problems while providing continuous-time smooth trajectories.

\textbf{Results on a 3-D quadrotor:}
We next extend the \texttt{deliver} benchmark to a $3$-D setting.
For each feasible region in \texttt{deliver}, we assign a valid interval along the $z$-axis.
The wall regions remain obstacles regardless of the $z$ coordinate.
The resulting scenario is shown in Figure~\ref{fig:3Dresult}.
For clarity, the wall regions are omitted from the $3$-D visualization.
The STL specification is the same as $\Phi_6$ in \eqref{eq:stlfordeliver}.

We solve this problem for a quadrotor with fixed yaw using our method and the PWL method.
When using our approach, we require the trajectory to be four times differentiable.
This requirement ensures dynamic feasibility for the quadrotor because the quadrotor dynamics are differentially flat~\cite{mellinger2011minimum}.
As shown in Table~\ref{tab:runningtime}, our method solves the problem in $15.68$ seconds, whereas the PWL method requires $489$ seconds.
Moreover, the PWL method only returns a piecewise-linear path, so an additional tracking controller is required for execution.

To isolate the effect of the smoothness constraint, we also tested our method without the smoothness requirement.
In this case, the solve times of our method on the $2$-D \texttt{deliver} benchmark and the $3$-D quadrotor benchmark are $0.86$ seconds and $4.8$ seconds, respectively.
By comparison, the PWL method requires $2.76$ seconds and $489$ seconds on the same two problems.
These results indicate that the proposed GCS formulation scales favorably with the configuration-space dimension and is well suited for continuous-time STL motion planning of robotic systems.

\subsection{Simulation Experiments on High-Dimensional Humanoid}
\label{subsection:highdimexp}

\begin{figure*}[t]
    \centering
    \begin{minipage}[t]{0.38\linewidth}
        \vspace{0pt}
        \centering
        \begin{tikzpicture}
            \node[inner sep=0pt] (scene) {\includegraphics[width=\linewidth]{ 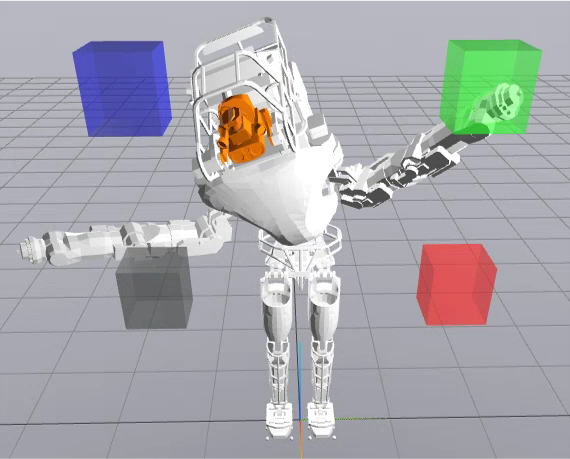}};
            \node[fill=green!55!black,text=white,rounded corners=1pt,inner sep=1.5pt,font=\scriptsize,align=center] at (2.65,1.85)
            {$\mathbf{G}_{[0,6]}\mathbf{F}_{[0,3]}g$};
            \node[fill=blue!70!black,text=white,rounded corners=1pt,inner sep=1.5pt,font=\scriptsize,align=center] at (-2.75,1.95)
            {$\mathbf{F}_{[3,6]}b$};
            \node[fill=red!75!black,text=white,rounded corners=1pt,inner sep=1.5pt,font=\scriptsize,align=center] at (2.65,-1.15)
            {$\mathbf{F}_{[3,6]}r$};
            \node[fill=black!70,text=white,rounded corners=1pt,inner sep=1.5pt,font=\scriptsize,align=center] at (-2.70,-1.15)
            {$\mathbf{F}_{[7,8]}\mathbf{G}_{[0,2]}l$};
        \end{tikzpicture}
    \end{minipage}\hfill
    \begin{minipage}[t]{0.55\linewidth}
        \vspace{-0.50em}
        \centering
        \subfigure[$t_1=2$s]{\includegraphics[height=0.096\textheight]{ 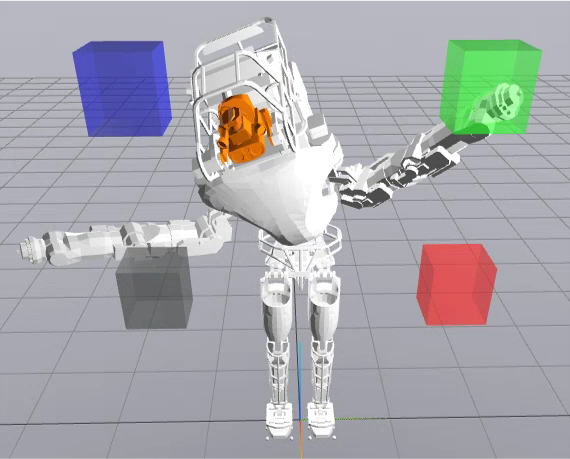}}
        \hfill
        \subfigure[$t_2=3.5$s]{\includegraphics[height=0.096\textheight]{ 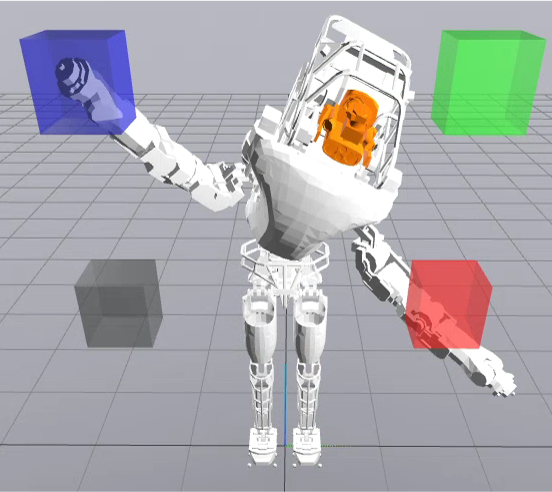}}
        \hfill
        \subfigure[$t_3=4$s]{\includegraphics[height=0.096\textheight]{ 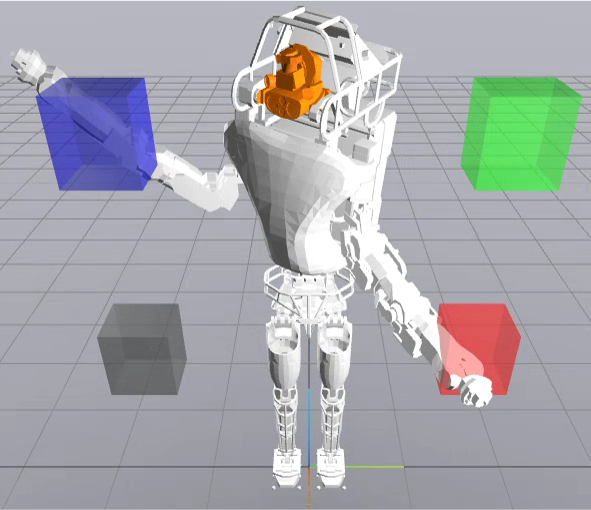}}

        \vspace{0.35em}
        \subfigure[$t_4=6$s]{\includegraphics[height=0.096\textheight]{ 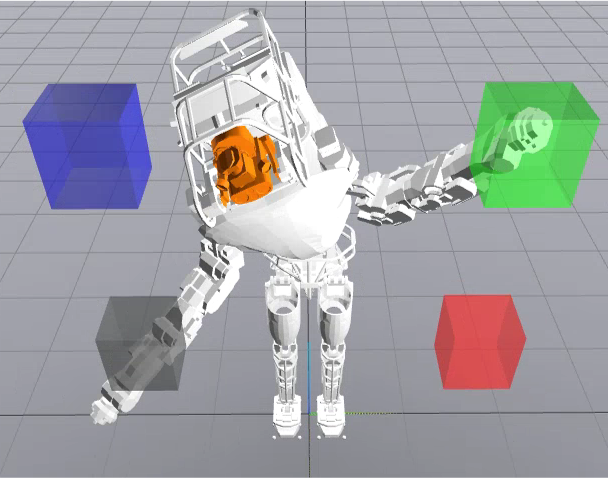}}
        \hspace{0.12\linewidth}
        \subfigure[$t_5=7.5$s]{\includegraphics[height=0.096\textheight]{ 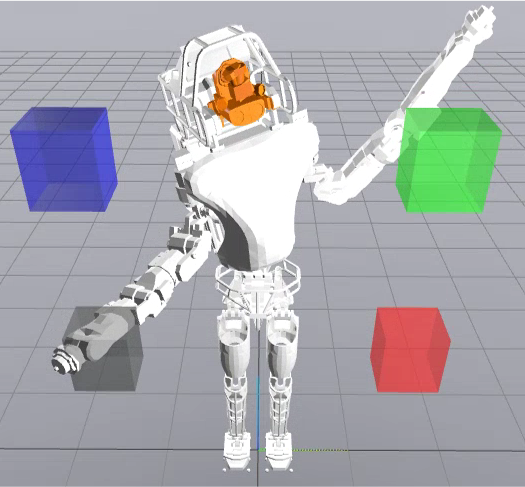}}
    \end{minipage}
    \vspace{0.35em}
    \caption{Visualization of the humanoid STL task. The left panel shows the annotated target regions, and the right panel shows enlarged key frames along the synthesized trajectory. The key frames verify the specification: $t_1$ and $t_4$ witness the repeated visits to $g$ in the two $3$-second subwindows of $[0,6]$; $t_2$ reaches $b$ and $t_3$ reaches $r$ inside $[3,6]$; and $t_5$ starts the $2$-second dwell in $l$.}
    \label{fig:30dofexample}
\end{figure*}

We next consider STL motion planning for the $30$-DoF Atlas humanoid model from Drake~\cite{drake}.
This experiment is designed to demonstrate the scalability of the proposed method in a high-dimensional configuration space.
As shown in Figure~\ref{fig:30dofexample}, the environment contains four target boxes colored green, blue, red, and black.
The humanoid must reach these boxes under different temporal constraints.
Specifically, over the time interval $[0,6]$, the robot must reach the upper-right green box at least once every $3$ seconds.
In addition, the blue and red boxes must be touched during the time interval $[3,6]$.
Finally, the robot must remain in the black box for $2$ seconds, with the stay starting at some time in $[7,8]$.
The STL task is
\[
\Phi=
\mathbf{G}_{[0,6]}\mathbf{F}_{[0,3]}g
\wedge
\mathbf{F}_{[3,6]}b
\wedge
\mathbf{F}_{[3,6]}r
\wedge
\mathbf{F}_{[7,8]}\mathbf{G}_{[0,2]}l,
\]
where $g$, $b$, $r$, and $l$ denote the regions in which the robot reaches the green, blue, red, and black targets, respectively.

We use inverse kinematics and C-IRIS~\cite{dai2024certified} to compute convex regions of the configuration space.
The green and red targets correspond to left-hand reaching constraints, while the blue and black targets correspond to right-hand reaching constraints.
We also construct an additional convex region without kinematic target constraints.
As in~\cite{kurtz2023temporal}, we do not model contact interactions with the ground or the contact wrench cone.
The pelvis pose is fixed in the world frame, so the robot is treated as a $30$-DoF system.

We use B\'ezier splines of degree $K=3$ and impose a $\mathcal{C}_2$ continuity constraint.
By solving the convex relaxation, the proposed method obtains a feasible trajectory in $10.2$ seconds, as reported in Table~\ref{tab:runningtime}.
The resulting trajectory, shown in Figure~\ref{fig:30dofexample}, satisfies the STL task.
This experiment demonstrates that the proposed framework can handle STL motion-planning problems for high-dimensional robotic systems with complex temporal requirements.

\subsection{Hardware Experiment on Robot Arm}
\label{subsection:hardwardexp}
\begin{figure*}[t]
    \centering
    \newsavebox{\hardwareEnvBox}
    \newlength{\hardwareLeftWidth}
    \newlength{\hardwareRightWidth}
    \newlength{\hardwarePanelHeight}
    \newlength{\hardwareLeftDrop}
    \setlength{\hardwareLeftWidth}{0.526\linewidth}
    \setlength{\hardwareRightWidth}{0.445\linewidth}
    \setlength{\hardwarePanelHeight}{0.2422\linewidth}
    \setlength{\hardwareLeftDrop}{0.010\linewidth}
    \sbox{\hardwareEnvBox}{%
        \begin{tikzpicture}
            \node[inner sep=0pt] (env) {\includegraphics[width=\hardwareLeftWidth]{ 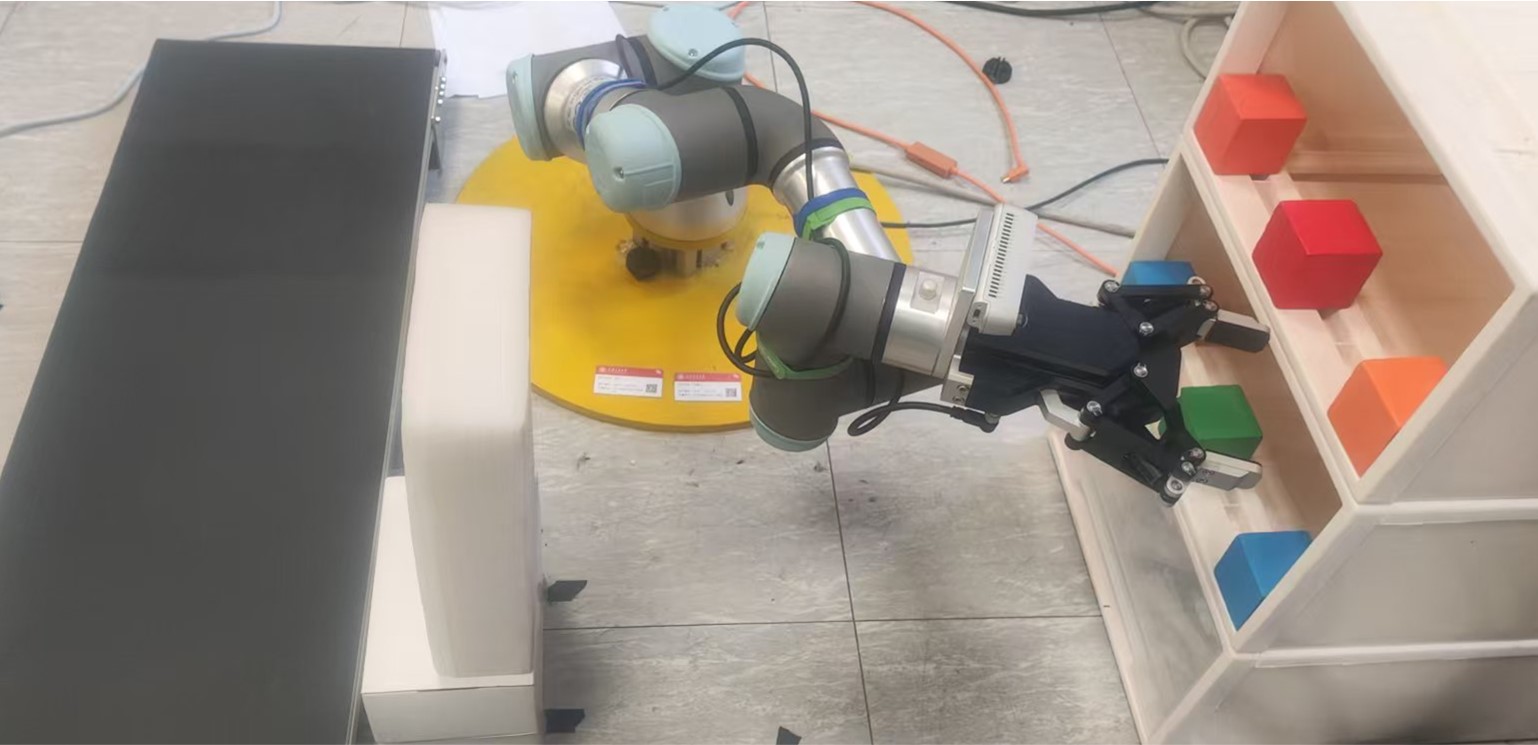}};
            \node[fill=white,fill opacity=0.86,text opacity=1,rounded corners=1pt,inner sep=1.5pt,font=\scriptsize,align=center] at (-3.05,-0.72) {conveyor\\place $(b)$};
            \node[fill=white,fill opacity=0.86,text opacity=1,rounded corners=1pt,inner sep=1.5pt,font=\scriptsize,align=center] at (-1.20,-1.25) {obstacle\\avoid};
            \node[fill=white,fill opacity=0.86,text opacity=1,rounded corners=1pt,inner sep=1.5pt,font=\scriptsize,align=center] at (3.45,1.35) {shelf\\pick items};
            \node[fill=green!60!black,text=white,rounded corners=1pt,inner sep=1.5pt,font=\scriptsize] at (3.10,-0.68) {green item ($a$)};
            \node[fill=red!75!black,text=white,rounded corners=1pt,inner sep=1.5pt,font=\scriptsize] at (3.30,0.56) {red item ($c$)};
        \end{tikzpicture}%
    }
    \begin{minipage}[t]{\hardwareLeftWidth}
        \vspace{\hardwareLeftDrop}
        \centering
        \usebox{\hardwareEnvBox}
    \end{minipage}\hfill
    \begin{minipage}[t][\hardwarePanelHeight][s]{\hardwareRightWidth}
        \vspace{0pt}
        \centering
        \makebox[\linewidth][c]{%
        \subfigure[$t_1=10$s]{\includegraphics[height=0.069\textheight]{ 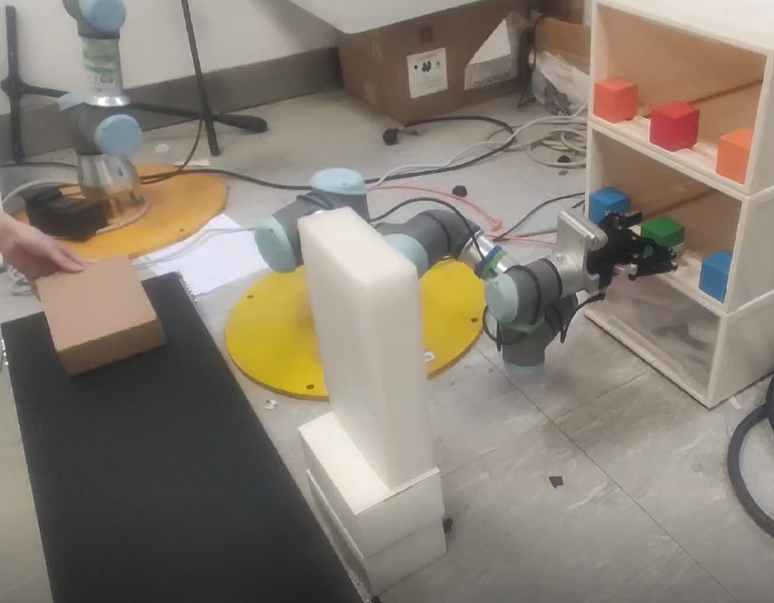}}
        \hspace{0.025\linewidth}
        \subfigure[$t_2=23$s]{\includegraphics[height=0.069\textheight]{ 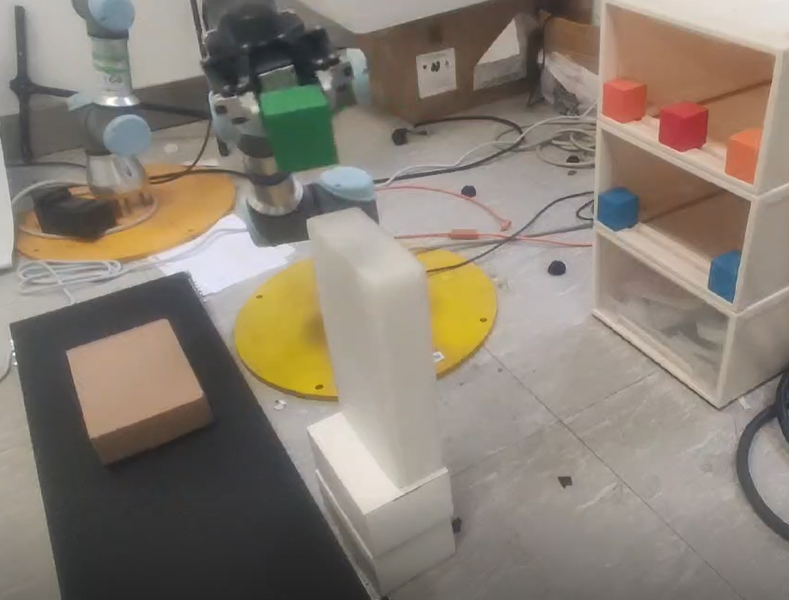}}
        \hspace{0.025\linewidth}
        \subfigure[$t_3=27$s]{\includegraphics[height=0.069\textheight]{ 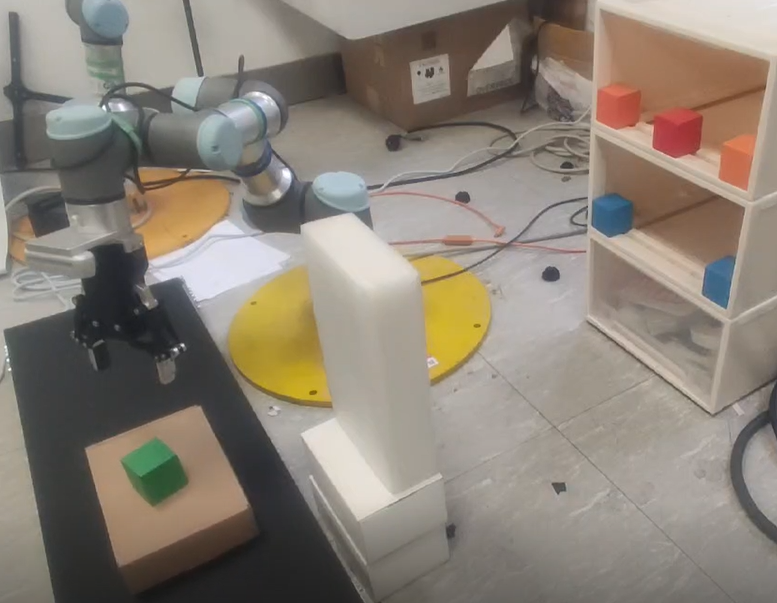}}}
        \par\vfill
        \makebox[\linewidth][c]{%
        \subfigure[$t_4=42$s]{\includegraphics[height=0.069\textheight]{ 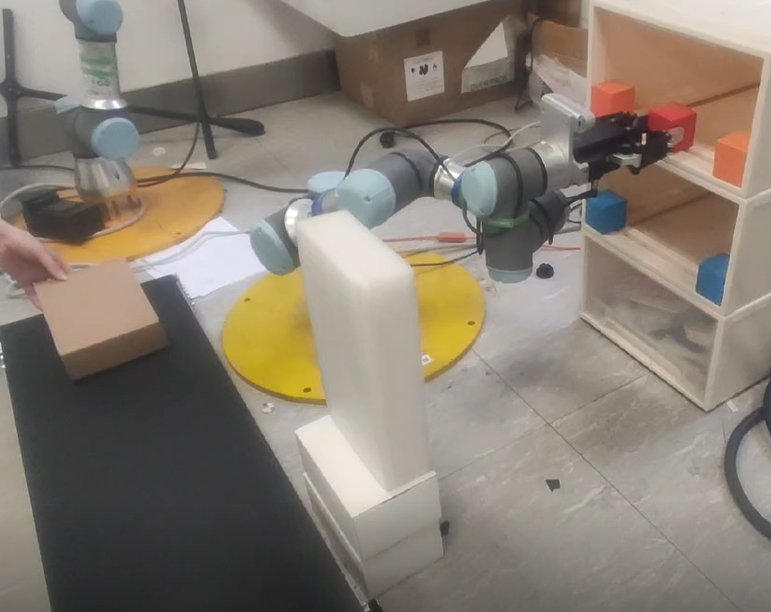}}
        \hspace{0.025\linewidth}
        \subfigure[$t_5=53$s]{\includegraphics[height=0.069\textheight]{ 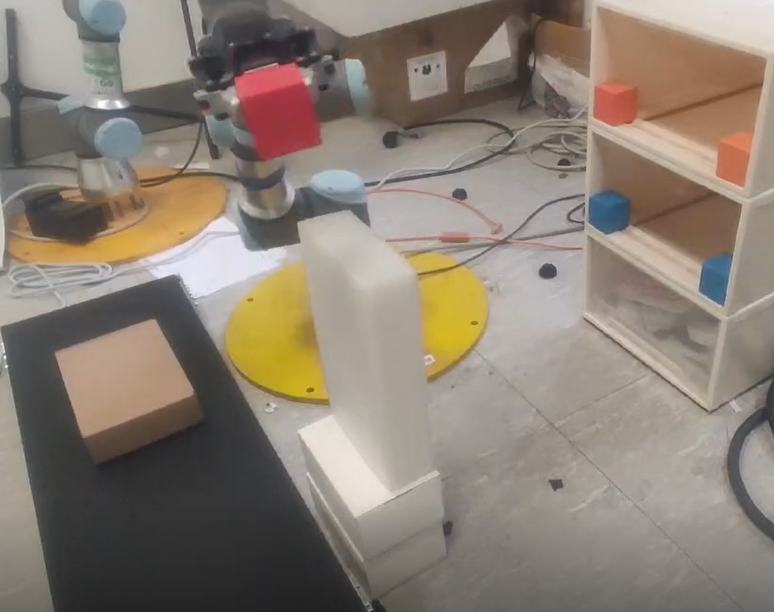}}
        \hspace{0.025\linewidth}
        \subfigure[$t_6=58$s]{\includegraphics[height=0.069\textheight]{ 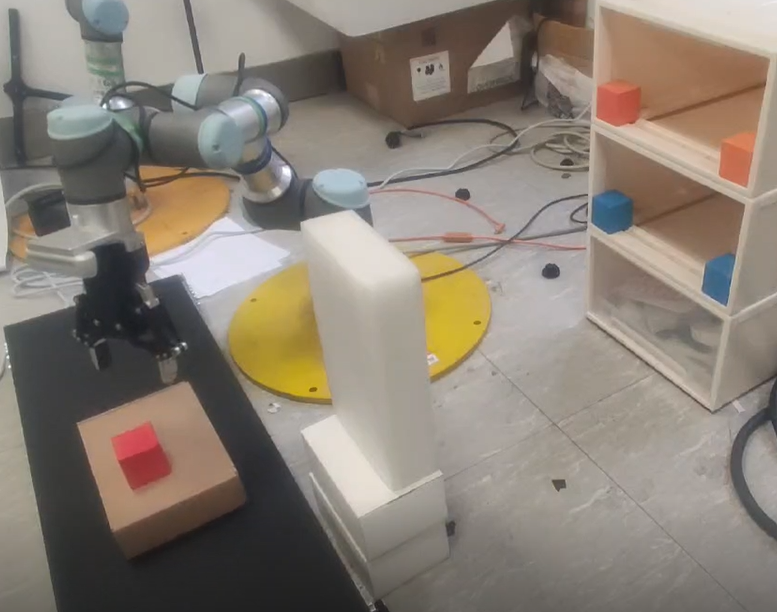}}}
    \end{minipage}
    \caption{Visualization of the hardware experiment under the STL task. The left panel shows the hardware setup, where the green item corresponds to region $a$, the red item corresponds to region $c$, and both items should be placed on the conveyor at region $b$ while the middle obstacle is avoided. The right panel shows six representative key frames: $t_1$ shows grasping the green item at $a$, $t_2$ shows carrying it across the obstacle, and $t_3$ shows successfully placing it at $b$; $t_4$ shows grasping the red item at $c$, $t_5$ shows carrying it across the obstacle, and $t_6$ shows successfully placing it at $b$.}
    \label{fig:hardware}
    \label{fig:hardware-scenario}
\end{figure*}

Finally, we validate the proposed approach on a hardware platform using a $6$-DoF model of a UR-3 robot arm.
The experimental setup is shown in Figure~\ref{fig:hardware-scenario}.
The right side contains a shelf from which the manipulator must pick up two items in sequence, namely the green and red items.
The middle part contains an obstacle that must be avoided throughout the task.
The left side contains a conveyor belt on which the items must be placed within prescribed time intervals.

The STL task is
\[
\begin{aligned}
\Phi
=&\
\mathbf{F}_{[0,5.5]}e
\wedge
\mathbf{F}_{[6,10]}\mathbf{G}_{[0,2]}a
\wedge
\mathbf{F}_{[13.5,15]}e \\
&\wedge
\mathbf{F}_{[21,23]}\mathbf{G}_{[0,4]}b
\wedge
\mathbf{F}_{[33,35]}f
\wedge
\mathbf{F}_{[36,40]}\mathbf{G}_{[0,2]}c \\
&\wedge
\mathbf{F}_{[43.5,45.5]}f
\wedge
\mathbf{F}_{[51,56]}\mathbf{G}_{[0,4]}b .
\end{aligned}
\]
Here, regions $a$ and $c$ are the grasping locations for the green and red items, respectively.
Region $b$ is the placement location for both items.
Regions $e$ and $f$ are auxiliary positions that allow the manipulator to approach and grasp the green and red items more reliably.
The STL formula requires the robot arm to visit auxiliary regions before and after grasping, and to remain in the grasping and placement regions for prescribed durations so that the end effector can reliably grasp and release the items.

We use inverse kinematics and C-IRIS~\cite{dai2024certified} to compute convex regions of the configuration space corresponding to each task region.
We then search for a trajectory satisfying the STL task using B\'ezier splines of degree $K=8$ under a $\mathcal{C}_4$ continuity constraint.
By solving the convex relaxation, the proposed method obtains a feasible trajectory in $9.65$ seconds.
As shown in Figure~\ref{fig:hardware}, the computed trajectory satisfies the STL specification and can be executed on the hardware platform.

For comparison, the method in~\cite{kurtz2020trajectory} considers the STL task $\mathbf{G}_{[40,50]}(a\vee b)$ for a $7$-DoF robot arm and requires $160$ seconds to find a solution.
In contrast, our approach handles a more complex STL task on a real robot platform while finding a solution in substantially less time.
This hardware experiment further demonstrates the practical efficiency of the proposed GCS-based STL motion-planning framework.

\section{Conclusion}\label{sec:con}
This paper presented a continuous-time STL motion-planning framework based on timed automata and graphs of convex sets.
The proposed approach converts logical task progress, real-time constraints, convex-region occupancy, smoothness requirements, and velocity bounds into a unified GCS shortest-path formulation.
The resulting solution directly reconstructs a B\'ezier-spline trajectory, thereby avoiding dense time discretization while retaining continuous-time satisfaction guarantees.
We proved the soundness of the construction. We also analyzed the computational efficiency of the framework.
To improve the scalability of the automaton layer, we further introduced a compact timed-automaton construction for an expressive STL fragment based on constant-size temporal templates and Boolean composition.
The effectiveness of the proposed framework was validated through extensive experiments.
On low-dimensional STL planning benchmarks, the method achieved competitive or better runtime compared with existing optimization-based approaches while generating smooth continuous-time trajectories.
The $3$-D quadrotor, $30$-DoF humanoid, and UR-3 robot arm experiments demonstrated favorable scalability to higher-dimensional systems.  
Future work will investigate richer STL fragments, robustness under uncertainty, and extensions to multi-agent robotic systems.

\bibliographystyle{ieeetr}
\bibliography{main}
\appendix
\subsection{Proof of Proposition~\ref{prop:productandunion}}
\begin{proof}[\bf Proof of Proposition~\ref{prop:productandunion}]
We first prove 1).
For the ``$\Longrightarrow$'' direction, suppose that $\xi \models A_1\bigsqcup A_2$.
Then there exists an accepting run $\rho$ of $A_1\bigsqcup A_2$ that induces $\xi$.
Since $S_1\cap S_2=\emptyset$ and the union construction introduces no transition between $A_1$ and $A_2$, the run $\rho$ is entirely contained in either $A_1$ or $A_2$.
Thus, $\xi\models A_1$ or $\xi\models A_2$, which implies $\xi\models \Phi_1\vee\Phi_2$.
Conversely, if $\xi\models \Phi_1\vee\Phi_2$, then $\xi\models\Phi_1$ or $\xi\models\Phi_2$.
By the assumed converse direction for $A_1$ and $A_2$, there is an accepting run of either $A_1$ or $A_2$ that induces $\xi$.
The same run is also an accepting run of $A_1\bigsqcup A_2$, and hence $\xi\models A_1\bigsqcup A_2$.

We next prove the ``$\Longrightarrow$'' direction in 2).
Suppose that $\xi\models A_1\bigotimes A_2$, and let $\rho$ be an accepting run of $A_1\bigotimes A_2$ inducing $\xi$.
Write each product state as $s_j=(s_j^1,s_j^2)$.
By projecting $\rho$ onto its first component, deleting all right transitions, and merging the corresponding time intervals, we obtain a run $\rho_1$ of $A_1$.
Indeed, right transitions do not change the first component and, because $C_1\cap C_2=\emptyset$, they do not reset any clock in $C_1$.
Therefore, the clock valuations and guards along the remaining left transitions are exactly those of a valid run of $A_1$.
Since $\rho$ is accepting in the product, its final state belongs to $S_{1,F}\times S_{2,F}$; hence the projected run $\rho_1$ is accepting in $A_1$.
Moreover, $\rho_1$ still induces $\xi$: whenever $\xi(\tau)\in M((s_j^1,s_j^2))$, we have
\[
M((s_j^1,s_j^2))=M_1(s_j^1)\cap M_2(s_j^2)\subseteq M_1(s_j^1).
\]
Thus, $\xi\models A_1$.
The same projection argument applied to the second component yields $\xi\models A_2$.
Using the assumed implications $\xi\models A_i\Rightarrow \xi\models\Phi_i$, we obtain $\xi\models\Phi_1\wedge\Phi_2$.

Finally, we prove the ``$\Longleftarrow$'' direction in 2).
Assume that $\xi\models\Phi_1\wedge\Phi_2$.
By the assumed converse direction, there exist accepting runs $\rho_1$ and $\rho_2$ of $A_1$ and $A_2$, respectively, both inducing $\xi$.
We construct a run $\rho$ of $A_1\bigotimes A_2$ by ordering all transition times of $\rho_1$ and $\rho_2$.
At each such time, the product run takes a left transition if the next transition comes from $\rho_1$, and a right transition if it comes from $\rho_2$; if two transition times coincide, either order gives a valid run because $C_1\cap C_2=\emptyset$.
The state of $\rho$ is always the pair of the current states of $\rho_1$ and $\rho_2$, and the clock valuations are obtained by combining the valuations of the two component runs.
By Definition~\ref{def:productofTA}, every transition of $\rho$ is therefore a valid product transition.
The final state of $\rho$ belongs to $S_{1,F}\times S_{2,F}$, so $\rho$ is accepting.
Furthermore, since both $\rho_1$ and $\rho_2$ induce $\xi$, for every time $\tau$ the trajectory lies in both component regions, and hence in their intersection $M_1(s^1)\cap M_2(s^2)=M((s^1,s^2))$.
Thus, $\rho$ induces $\xi$, which proves $\xi\models A_1\bigotimes A_2$.
This completes the proof.
\end{proof}
\subsection{TA Construction for Formula $\Phi_4$}
\begin{figure}[t]
    \centering
    \includegraphics[width=\linewidth]{ 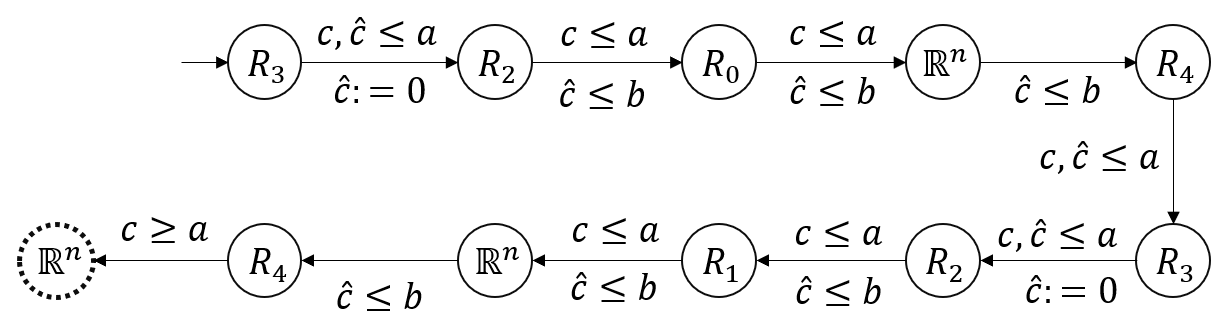}
    \caption{The TA for $\hat{\Phi}$}
    \label{fig:newTA}
\end{figure}

We next illustrate how to construct a TA for the STL task
\begin{equation}\label{eq:newTAdef}
    \hat{\Phi}=\mathbf{G}_{[0,a]}(\bigvee_{i=1}^2G_i \implies \mathbf{F}_{[0,b]} S) \wedge (\bigwedge_{i=1}^2 \mathbf{F}_{[0,a]}G_i).
\end{equation}
The TA for $\hat{\Phi}$ in \eqref{eq:newTAdef} is $A_{\hat{\Phi}}=(S,S_0,R,C,\delta,M,S_F)$ is  shown in Figure~\ref{fig:newTA}, where
\begin{itemize}[leftmargin=*,topsep=0pt,itemsep=3pt]
    \item $S=\{ s_0,s_1,\dots,s_{10}\}$, $S_0=\{s_0 \}$, and $S_F=\{s_{10}\}$;
    \item $R=\{ \mathbb{R}^n,R_0,R_1,R_2,R_3,R_4 \}$, where $R_0 \subseteq G_1$, $R_1 \subseteq G_2$, $G_1\cup G_2 \subseteq R_2$, $R_3 \cap (G_1\cup G_2)=\emptyset$, and $R_4 \subseteq S$;
    \item $C=\{c, \hat{c}\}$ is the set of clock variables; 
    \item $\delta:S\times S\to \Theta(C) \times 2^{C}$ satisfies $\delta(s_0,s_1)=(\{c\leq a,\hat{c}\leq a\}, \{\hat{c} \})$;
        $\delta(s_1,s_2)=(\{c\leq a,\hat{c}\leq b\},\emptyset)$;
        $\delta(s_2,s_3)=(\{c\leq a,\hat{c}\leq b\},\emptyset)$;
        $\delta(s_3,s_4)=(\{\hat{c}\leq b\},\emptyset)$;
        $\delta(s_4,s_5)=(\{c\leq a,\hat{c}\leq a\},\emptyset)$;
        $\delta(s_5,s_6)=(\{c\leq a,\hat{c}\leq a\}, \{\hat{c} \})$;
         $\delta(s_6,s_7)=(\{c\leq a,\hat{c}\leq b\},\emptyset)$;
        $\delta(s_7,s_8)=(\{c\leq a,\hat{c}\leq b\},\emptyset)$;
       $\delta(s_8,s_9)=(\{\hat{c}\leq b\},\emptyset)$;
       $\delta(s_9,s_{10})=(\{c\geq a\},\emptyset)$;
    \item $M(s_0)=M(s_5)=R_3$, $M(s_1)=M(s_6)=R_2$, $M(s_2)=R_0$, $M(s_7)=R_1$, $M(s_3)=M(s_8)=M(s_{10})=\mathbb{R}^n$, $M(s_4)=M(s_9)=R_4$.
\end{itemize}
The construction is sound because the two clocks encode the two timing requirements in $\hat{\Phi}$.
The global clock $c$ is never reset: the guards $c\leq a$ force visits to $R_0\subseteq G_1$ and $R_1\subseteq G_2$ before time $a$, while the guard $c\geq a$ on the final transition keeps the implication monitored over the whole interval $[0,a]$.
Whenever the run enters a block associated with $G_1$ or $G_2$, the auxiliary clock $\hat{c}$ is reset, and the guards $\hat{c}\leq b$ force the run to reach $s_4$ or $s_9$, whose regions lie in $R_4\subseteq S$, within $b$ seconds.
In the remaining monitored states, the antecedent is false because the trajectory lies in $R_3$ disjoint from $G_1\cup G_2$, or the trajectory is already in $S$.
Hence, every accepted trajectory visits both $G_1$ and $G_2$ within $[0,a]$ and satisfies $(G_1\vee G_2)\implies \mathbf{F}_{[0,b]}S$ at all times in $[0,a]$, which implies $\xi\models\hat{\Phi}$.

Note that for STL task
\[
\hat{\Phi}_n=\mathbf{G}_{[0,a]}(\bigvee_{i=1}^nG_i \implies \mathbf{F}_{[0,b]} S) \wedge (\bigwedge_{i=1}^n \mathbf{F}_{[0,a]}G_i),
\]
the TA $A_{\hat{\Phi}_n}$ can be constructed recursively from $A_{\hat{\Phi}_{n-1}}$ by duplicating the states $s_0,s_1,s_2,s_3,s_4$ corresponding to $G_n$ and inserting the duplicated states between the penultimate and final states of $A_{\hat{\Phi}_{n-1}}$.

\end{document}